
\documentclass[a4paper]{article}
\usepackage{fullpage,url,amssymb,amsmath,amsthm}
\usepackage[linesnumbered,ruled,vlined]{algorithm2e}
\usepackage{graphicx,caption,subcaption}
\captionsetup{justification=centering}
\graphicspath{{figures/}}
\usepackage{ifpdf}
\ifpdf \DeclareGraphicsRule{*}{mps}{*}{} \fi
\usepackage{hyperref}

\def\baralpha{{\bar\alpha}}
\def\sign{{\mathrm{sign}}}
\def\HS{{\mathrm{HS}}}
\def\HE{{\mathrm{HE}}}
\def\CS{{\mathrm{CS}}}

\def\lhs{\mathrm{lhs}}
\def\rhs{\mathrm{rhs}}
\def\dx{\mathrm{d}x}

\def\calX{\mathcal{X}}
\def\calM{\mathcal{M}}
\def\calF{\mathcal{F}}
\def\Holder{\mathtt{H}}
\def\diag{\mathrm{diag}}
\def\inner#1#2{\langle #1,#2\rangle}
\def\tp{\tilde{p}}
\def\tq{\tilde{q}}
\def\inner#1#2{{\langle #1, #2 \rangle}}
\def\dmu{\mathrm{d}\mu}
\def\KL{\mathrm{KL}}
\DeclareMathOperator*{\argmin}{\arg\,\min}
\DeclareMathOperator*{\argmax}{\arg\,\max}

\newtheorem{example}{Example}
\newtheorem{lemma}{Lemma}
\newtheorem{fact}{Fact}
\newtheorem{definition}{Definition}
\newtheorem{remark}{Remark}

\begin{document}

\sloppy

\title{On H\"older projective divergences\footnote{Reproducible source code available at \url{https://www.lix.polytechnique.fr/\textasciitilde nielsen/HPD/}}}

\author{Frank Nielsen\footnote{Contact author: \texttt{Frank.Nielsen@acm.org}}\\
\'Ecole Polytechnique, France\\
Sony Computer Science Laboratories, Japan\\
\and{}Ke Sun\\
Computer, Electrical and Mathematical Sciences and Engineering Division\\
King Abdullah University of Science and Technology (KAUST), Saudi Arabia
\and{}St\'ephane Marchand-Maillet\\
Viper Group, Computer Vision and Multimedia Laboratory\\
University of Geneva, Switzerland
}
\date{}
\maketitle

\begin{abstract}
We describe a framework to build distances by measuring the tightness of inequalities, and introduce the notion of proper statistical divergences and improper pseudo-divergences.
We then consider the H\"older ordinary and reverse inequalities, and present two novel classes of H\"older divergences and pseudo-divergences that both encapsulate the special case of the Cauchy-Schwarz divergence.
We report closed-form formulas for those statistical dissimilarities when considering distributions belonging to the same exponential family provided that the natural parameter space is a cone (e.g., multivariate Gaussians), or affine (e.g., categorical distributions).
Those new classes of H\"older distances are invariant to rescaling, and thus do not require distributions to be normalized.
Finally, we show how to compute statistical H\"older centroids with respect to those divergences, and carry out center-based clustering toy experiments on a set of  Gaussian distributions
that demonstrate empirically that symmetrized H\"older divergences outperform the symmetric Cauchy-Schwarz divergence.
\end{abstract}

\noindent Keywords: \emph{H\"older inequalities;  H\"older divergences; projective divergences; Cauchy-Schwarz divergence; 
H\"older escort divergences; skew Bhattacharyya divergences; exponential families; conic exponential families; escort distribution; clustering.}

\section{Introduction: Inequality, proper divergence and improper pseudo-divergence}\label{sec:intro}

\subsection{Statistical divergences from inequality gaps}
An inequality is denoted mathematically by  $\lhs\leq \rhs$, where $\lhs$ and $\rhs$ denote respectively the {\em left-hand-side} and {\em right-hand-side}
of the inequality.
One can build {\em dissimilarity} measures from inequalities $\lhs\leq \rhs$ by measuring the {\em inequality tightness}: 
For example, we may quantify the tightness of an inequality by its {\em difference  gap}: 
\begin{equation}
\Delta = \rhs-\lhs \geq 0.
\end{equation}

When $\lhs>0$, the inequality tightness can also be gauged by the {\em log-ratio gap}: 
\begin{equation}
D = \log\frac{\rhs}{\lhs}=-\log\frac{\lhs}{\rhs} \geq 0.
\end{equation}

We may further compose this inequality tightness value measuring non-negative gaps with a {\em strictly monotonically
increasing function} $f$ (with $f(0)=0$).

A {\em bi-parametric} inequality $\lhs(p,q)\le\rhs(p,q)$ is called {\em proper}
if it is strict for $p\neq{}q$ (i.e., $\lhs(p,q)<\rhs(p,q), \forall p\neq{}q$)
and {\em tight} if and only if (iff) $p=q$  (i.e., $\lhs(p,q) = \rhs(p,q), \forall p=q$).
Thus a proper bi-parametric inequality allows one to define dissimilarities such that $D(p,q)=0$ iff  $p=q$.
Such a dissimilarity is called proper. Otherwise, an inequality or dissimilarity is said {\em improper}.
Note that there are many equivalent words used in the literature instead of (dis-)similarity:
distance (although often assumed to have metric properties), pseudo-distance, discrimination, proximity, information deviation, etc. 

A {\em statistical dissimilarity} between two discrete or continuous distributions $p(x)$ and $q(x)$ on a support $\calX$ 
can thus be defined from inequalities by summing up or taking the integral for the inequalities instantiated on the observation space $\calX$:
\begin{align}
\forall x\in\calX,\quad &D_x(p,q)=\rhs(p(x),q(x))-\lhs(p(x),q(x)) \Rightarrow\\
&D(p,q) = \left\{
\begin{array}{ll}
\sum_{x\in\cal X} \big[ \rhs(p(x),q(x))-\lhs(p(x),q(x)) \big]& \mbox{discrete case},\\
\int_{\cal X} \big[ \rhs(p(x),q(x))-\lhs(p(x),q(x)) \big]\dx & \mbox{continuous case}.
\end{array}
\right.
\end{align}
In such a case, we get a {\em separable divergence}.
Some non-separable inequalities induce a {\em non-separable divergence}.
For example, the renown Cauchy-Schwarz divergence~\cite{CSRepDataSampling-2011} is not separable because in the inequality:
\begin{equation}
\int_\calX p(x)q(x)\dx \leq \sqrt{\left(\int_\calX p(x)^2 \dx\right)\left(\int_\calX q(x)^2 \dx\right)},
\end{equation}
the rhs is not separable.

Furthermore,  a proper dissimilarity is called a {\em divergence} in information
geometry~\cite{IG-2016} when it is $C^3$ (i.e., three times differentiable thus
allowing to define a metric tensor~\cite{rao45} and a cubic tensor~\cite{IG-2016}).

Many familiar distances can be reinterpreted as inequality gaps in disguise. 
For example, Bregman divergences~\cite{cb-2005} and Jensen divergences~\cite{tJ-2015} (also called Burbea-Rao divergences~\cite{burbea-1982,BR-2011}) can be reinterpreted as inequality difference gaps,
 and the Cauchy-Schwarz distance~\cite{CSRepDataSampling-2011} as an inequality log-ratio gap:

\begin{example}[Bregman divergence as a Bregman score-induced gap divergence]
A proper score function~\cite{spsr-2007} $S(p:q)$ induces a gap divergence $D(p:q)=S(p:q)-S(p:p)\geq 0$.
A Bregman divergence~\cite{cb-2005} $B_F(p:q)$ for a strictly convex and differentiable real-valued generator $F(x)$ is induced by the {\em Bregman score} 
$S_F(p:q)$.  
Let $S_F(p:q)=-F(q)-\inner{p-q}{\nabla F(q)}$ denote the Bregman proper score minimized for $p=q$.
Then the Bregman divergence is a gap divergence: $B_F(p:q)=S_F(p:q)-S_F(p:p)\geq 0$.
When $F$ is strictly convex, the Bregman score is proper, and the Bregman divergence is proper.
\end{example}

\begin{example}[Cauchy-Schwarz distance as a log-ratio gap divergence]
Consider the Cauchy-Schwarz inequality $\int_\calX p(x)q(x)\dx \leq \sqrt{\left(\int_\calX p(x)^2 \dx\right)\left(\int_\calX q(x)^2 \dx\right)}$.
Then the Cauchy-Schwarz distance~\cite{CSRepDataSampling-2011}  between two continuous distributions is defined by $\CS(p(x):q(x))=-\log\frac{\int_\calX p(x)q(x)\dx}{\sqrt{(\int_\calX p(x)^2 \dx)(\int_\calX q(x)^2 \dx)}}\geq 0$.
\end{example}

Note that we use the modern notation $D(p(x):q(x))$ to emphasize that the divergence is potentially asymmetric: $D(p(x):q(x))\not =D(q(x):p(x))$, see~\cite{IG-2016}.
In information theory~\cite{cover-2012}, the older notation ``$||$'' is often used instead of ``$:$'' that is used in information geometry~\cite{IG-2016}.

To conclude this introduction, let us finally introduce the notion of {\em projective statistical distances}.
A statistical distance $D(p(x):q(x))$ is said projective when:
\begin{equation}
D(\lambda p(x):\lambda' q(x)) = D(p(x):q(x)),\quad \forall \lambda,\lambda'>0.
\end{equation}

The Cauchy-Schwarz distance is a projective divergence. 
Another example of such a projective divergence is the parametric $\gamma$-divergence~\cite{GammaDiv-2008}.

\begin{example}[$\gamma$-divergence as a projective score-induced gap divergence]
The $\gamma$-divergence~\cite{GammaDiv-2008,PM-ProjDiv-2016} $D_\gamma(p(x):q(x))$ for $\gamma>0$ is projective:
\begin{align*}
D_{\gamma}(p(x),q(x)) &=  S_{\gamma}(p(x),q(x))-S_{\gamma}(p(x),p(x)), \mbox{with}\\
S_{\gamma}(p(x),q(x)) &=-\frac{1}{\gamma(1+\gamma)}\frac{\int p(x)q(x)^\gamma\dx}{\left(
\int q(x)^{1+\gamma} \dx
\right)^{\frac{\gamma}{1+\gamma}}}.
\end{align*}
The $\gamma$-divergence is related to the proper pseudo-spherical score~\cite{GammaDiv-2008}.
\end{example}
 
The $\gamma$-divergences have been proven useful for robust statistical inference~\cite{GammaDiv-2008} in the presence of heavy outlier contamination.

\subsection{Pseudo-divergences and the axiom of indiscernability}

Consider a broader class of \emph{statistical
pseudo-divergences} based on \emph{improper inequalities}, where
the tightness of $\lhs(p,q)\le\rhs(p,q)$ does not imply that $p=q$.
This family of dissimilarity measures have interesting
properties which have not been studied before.

Formally, statistical pseudo-divergences are defined with respect to density
measures $p(x)$ and $q(x)$ with $x\in\calX$, where $\calX$ denotes the support. By
definition, pseudo-divergences satisfy the following three fundamental properties:

\begin{enumerate}
\item Non-negativeness: $D(p(x):q(x)) \ge 0$ for any $p(x), q(x)$;

\item Reachable indiscernability:
\begin{itemize}
\item $\forall{p(x)}$, there exists $q(x)$ such that $D(p(x):q(x))=0$,
\item $\forall{q(x)}$, there exists $p(x)$ such that $D(p(x):q(x))=0$.
\end{itemize}

\item Positive correlation: If $D(p:q)=0$, then $\left( p(x_1)-p(x_2) \right) \left( q(x_1)-q(x_2) \right) \ge0$ for any $x_1$, $x_2\in\calX$.
\end{enumerate}

As compared to \emph{statistical divergence} measures such as the Kullback-Leibler (KL)
divergence:
\begin{equation}
\KL(p(x):q(x))=\int_\calX p(x)\log\frac{p(x)}{q(x)}\dx,
\end{equation}
pseudo-divergences do not require $D(p(x):p(x))=0$. Instead, any pair of distributions 
$p(x)$ and $q(x)$ with
$D(p(x):q(x))=0$ only has to be ``positively correlated'' such that
$p(x_1)\le{}p(x_2)$ implies $q(x_1)\le{}q(x_2)$, and vice versa.  Any divergence
with $D(p(x):q(x))=0\Rightarrow{}p(x)=q(x)$  (law of indiscernibles) automatically satisfies
this weaker condition, and therefore any divergence belongs to the broader class of
pseudo-divergences. Indeed, if $p(x)=q(x)$ then $( p(x_1)-p(x_2) ) ( q(x_1)-q(x_2) )=(
p(x_1)-p(x_2) )^2 \ge 0$.  However the converse is not true. 
As we shall describe in the remainder, the family of
pseudo-divergences is {\em not} limited to proper divergence measures. 
In the remainder, the term ``pseudo-divergence''
refers to such divergences that are \emph{not} proper divergence measures.

We  study two novel statistical dissimilarity families: One family of statistical improper pseudo-divergences and
one family of proper statistical divergences.
Within the class of pseudo-divergences, this work concentrates on defining a
one-parameter family of dissimilarities called H\"older log-ratio gap divergence
that we concisely abbreviate as HPD for ``H\"older pseudo divergence'' in the remainder.
We also study its proper divergence counterpart termed HD for ``H\"older divergence.''

\subsection{Prior work and contributions}

The term ``H\"older divergence'' has first been coined in 2014 based on the definition of the {\em H\"older score}~\cite{affineInvariantDivergence-2014,scaleinvariantDiv-2014}:
The score-induced H\"older divergence $D(p(x):q(x))$ is a proper gap divergence that yields a scale-invariant divergence~\cite{affineInvariantDivergence-2014,scaleinvariantDiv-2014}.
Let $p_{a,\sigma}(x)=a\sigma p(\sigma x)$ for $a,\sigma>0$ be a transformation. Then a scale-invariant divergence~\cite{affineInvariantDivergence-2014,scaleinvariantDiv-2014}
satisfies $D(p_{a,\sigma}(x):q_{a,\sigma}(x))=\kappa(a,\sigma)D(p(x):q(x))$ for a function $\kappa(a,\sigma)>0$.
This gap divergence is proper since it is based on the so-called H\"older score~\cite{affineInvariantDivergence-2014,scaleinvariantDiv-2014} but is {\em not} projective and does not include the Cauchy-Schwarz divergence.
Due to these differences the H\"older log-ratio gap divergence introduced here shall not be confused with the
H\"older gap divergence induced by the {\em H\"older
score}~\cite{scaleinvariantDiv-2014,affineInvariantDivergence-2014} that relies
both on a scalar $\gamma$ and a function $\phi(\cdot)$.

We shall introduce {\em two} novel families of log-ratio projective gap divergences based on H\"older ordinary (or forward) and reverse inequalities that extend the Cauchy-Schwarz divergence, study their properties, and consider as an application clustering Gaussian distributions:
 We experimentally show better clustering results when using symmetrized H\"older divergences than using the Cauchy-Schwarz divergence.
To contrast with the ``H\"older composite score-induced divergences'' of~\cite{scaleinvariantDiv-2014}, our H\"older divergences admit closed-form expressions
between distributions belonging to the same exponential families~\cite{EF-2009} provided that the natural parameter space is a cone or affine.

Our main contributions are summarized as follows:

\begin{itemize}
\item Define the uni-parametric family of H\"older improper pseudo-divergences (HPDs) in \S\ref{sec:hpd} and the bi-parametric family of H\"older proper divergences
 in \S\ref{sec:hd} (HDs) for positive and probability measures,  and study their properties (including their relationships with skewed Bhattacharrya distances~\cite{BR-2011} via escort distributions);

\item Report closed-form expressions of those divergences for exponential families when the natural parameter space is a cone or affine (include but not limited to the cases of categorical distributions and multivariate Gaussian distributions) in \S\ref{sec:HCEF};

\item Provide approximation techniques to compute those divergences between mixtures based on log-sum-exp inequalities in \S\ref{sec:hmm};

\item Describe a variational center-based clustering technique based on the convex-concave procedure for computing  H\"older centroids,
and report our experimental results in \S\ref{sec:centroid}.
\end{itemize}

\subsection{Organization}
This paper is organized as follows: \S\ref{sec:hpd} introduces the definition
and  properties of H\"older pseudo-divergences (HPDs). It is followed by \S\ref{sec:hd} that describes H\"older  proper divergences (HDs).
In \S\ref{sec:HCEF}, closed-form expressions for those novel families of divergences are reported for the categorical, multivariate Gaussian, Bernoulli, Laplace and Wishart distributions.
\S\ref{sec:centroid} defines H\"older statistical centroids and presents a variational $k$-means clustering technique: We show experimentally that 
using  H\"older divergences improve over the Cauchy-Schwarz divergence.
Finally, \ref{sec:concl} concludes this work and hints at further perspectives from the viewpoint of statistical estimation and manifold learning.
In Appendix~\ref{sec:hdproof}, we recall the proof of the ordinary and reverse H\"older's inequalities.

\section{H\"older pseudo-divergence: Definition and properties}\label{sec:hpd}
H\"older's inequality (see~\cite{Holder-1889} and Appendix \ref{sec:hdproof} for a proof) states for
positive real-valued functions\footnote{In a more general form, H\"older's
inequality holds for any real and complex valued functions.
In this work, we only focus on real positive functions that are densities of positive measures.}
$p(x)$ and $q(x)$ defined on the support $\calX$ that:
\begin{equation}
\int_\calX p(x)q(x)\dx\le
\left(\int_\calX p(x)^\alpha\dx\right)^{\frac{1}{\alpha}}
\left(\int_\calX q(x)^\beta\dx\right)^{\frac{1}{\beta}},
\end{equation}
where exponents $\alpha$ and $\beta$ satisfy $\alpha\beta>0$ as well as
the {\em exponent conjugacy} condition: $\frac{1}{\alpha}+\frac{1}{\beta}=1$.
We also write $\beta=\bar{\alpha}=\frac{\alpha}{\alpha-1}$ meaning that
$\alpha$ and $\beta$ are {\em conjugate} H\"older exponents.
We check that $\alpha>1$ and $\beta>1$.
H\"older inequality holds even if the lhs is infinite (meaning that the integral diverges) since the rhs is also infinite in that case.

The \emph{reverse H\"older inequality} holds for conjugate
exponents $\frac{1}{\alpha}+\frac{1}{\beta}=1$ with $\alpha\beta<0$ (then
$0<\alpha<1$ and $\beta<0$, or $\alpha<0$ and $0<\beta<1$):

\begin{equation}
\int_\calX p(x)q(x)\dx\ge
\left(\int_\calX p(x)^\alpha\dx\right)^{\frac{1}{\alpha}} \left(\int_\calX q(x)^\beta\dx\right)^{\frac{1}{\beta}}.
\end{equation}
Both H\"older's inequality and the reverse H\"older inequality
turn tight when $p(x)^\alpha\propto{}q(x)^\beta$ (see proof in Appendix~\ref{sec:hdproof}).

\subsection{Definition}

Let $(\calX,\calF,\mu)$ be a measurable space where $\mu$ is the Lebesgue measure, and let $L^\gamma(\calX,\mu)$ denote the Lebesgue space of functions 
that have their $\gamma$-th power of absolute value Lebesgue integrable, for any
$\gamma>0$  (when $\gamma\geq 1$, $L^\gamma(\calX,\mu)$ is a Banach space). 
We define the following pseudo-divergence:

\begin{definition}[H\"older statistical pseudo-divergence, HPD]
For conjugate exponents $\alpha$ and $\beta$ with $\alpha\beta>0$,
the {\em H\"older Pseudo-Divergence} (HPD) between two densities $p(x)\in L^\alpha(\calX,\mu)$ and $q(x)\in L^\beta(\calX,\mu)$ of positive measures absolutely continuous  with respect to (wrt.) $\mu$ is defined by the following log-ratio gap:
\begin{equation}
D^\Holder_{\alpha}(p(x):q(x))
=-\log\left(\frac{\int_\calX p(x)q(x)\dx}{\left(\int_\calX p(x)^\alpha\dx\right)^{\frac{1}{\alpha}}
\left(\int_\calX q(x)^\beta\dx\right)^{\frac{1}{\beta}}}\right).
\end{equation}

When $0<\alpha<1$ and $\beta=\bar{\alpha}=\frac{\alpha}{\alpha-1}<0$, or
$\alpha<0$ and $0<\beta<1$, the {\em reverse HPD} is defined by:
\begin{equation}
D^\Holder_{\alpha}(p(x):q(x))= \log\left(\frac{\int_\calX p(x)q(x)\dx}
{\left(\int_\calX p(x)^\alpha\dx\right)^{\frac{1}{\alpha}}
\left(\int_\calX q(x)^\beta\dx\right)^{\frac{1}{\beta}}}\right).
\end{equation}
\end{definition}

By H\"older's inequality and the reverse H\"older inequality,
$D^\Holder_\alpha(p(x):q(x))\ge0$ with $D^\Holder_\alpha(p(x):q(x))=0$ iff $p(x)^\alpha\propto{q(x)}^\beta$
or equivalently $q(x)\propto{p(x)}^{\alpha/\beta}=p(x)^{\alpha-1}$.
When $\alpha>1$, $x^{\alpha-1}$ is monotonically increasing, and $D^\Holder_\alpha$ is indeed a pseudo-divergence.
However, the reverse HPD is \emph{not} a pseudo-divergence
because $x^{\alpha-1}$ will be monotonically decreasing if $\alpha<0$ or $0<\alpha<1$.
Therefore we only consider HPD with $\alpha>1$ in the remainder,
and leave here the notion of reverse H\"older divergence.

When $\alpha=\beta=2$, the HPD becomes the Cauchy-Schwarz divergence $\CS$~\cite{CSShapeMatching-2014}:
\begin{equation}
D^\Holder_{2}(p(x):q(x))=\CS(p(x):q(x))=
-\log\left(\frac{\int_\calX p(x)q(x)\dx}{\left(\int_\calX p(x)^2\dx\right)^{\frac{1}{2}}
\left(\int_\calX q(x)^2\dx\right)^{\frac{1}{2}}}\right),
\end{equation}
which has been proved useful to get closed-form divergence formulas between
mixtures of exponential families with conic or affine natural parameter
spaces~\cite{DivMix-2012}.

The Cauchy-Schwarz divergence is proper for probability densities
since the Cauchy-Schwarz inequality becomes an equality iff $q(x)=\lambda p(x)^{\alpha-1}=\lambda p(x)$ implies that
$\lambda=\int_\calX \lambda p(x) \dx=\int_\calX q(x) \dx=1$.
It is however not proper for positive densities.

\begin{fact}[CS is only proper for probability densities]
The Cauchy-Schwarz divergence $\CS(p(x):q(x))$ is proper for square-integrable probability densities $p(x), q(x)\in L^2(\calX,\mu)$ but not proper for positive square-integrable densities.
\end{fact}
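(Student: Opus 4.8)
The plan is to specialize the equality characterization already recorded for the HPD to the Cauchy-Schwarz case $\alpha=\beta=2$, and then to separate the normalized setting from the unnormalized one. Recall that $\CS(p(x):q(x))=D^\Holder_2(p(x):q(x))$, and that the whole argument hinges on the equality case of H\"older's inequality, which I would import as a black box from the earlier discussion.

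First I would invoke the equality condition established from H\"older's inequality: $D^\Holder_\alpha(p(x):q(x))=0$ iff $q(x)\propto p(x)^{\alpha-1}$. Setting $\alpha=2$ gives $\CS(p(x):q(x))=0$ iff $q(x)=\lambda p(x)$ for some constant $\lambda>0$. Thus the entire question of properness reduces to deciding when the proportionality $q=\lambda p$ forces $\lambda=1$.

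For the properness claim, I would assume that $p,q\in L^2(\calX,\mu)$ are both probability densities, so that $\int_\calX p(x)\dx=\int_\calX q(x)\dx=1$. If $\CS(p(x):q(x))=0$, then $q=\lambda p$; integrating both sides over $\calX$ and using normalization yields $1=\int_\calX q(x)\dx=\lambda\int_\calX p(x)\dx=\lambda$, whence $\lambda=1$ and $q=p$. The converse direction $p=q\Rightarrow\CS(p(x):q(x))=0$ is immediate from the definition, since the ratio inside the logarithm equals $1$. This establishes the law of indiscernibles, so $\CS$ is proper on probability densities. For the non-properness claim on positive densities, I would simply exhibit a counterexample: pick any positive $p\in L^2(\calX,\mu)$ and set $q=\lambda p$ with $\lambda>0$ and $\lambda\neq 1$; then $q$ is again a positive square-integrable density, $q\neq p$, yet $q\propto p$ forces $\CS(p(x):q(x))=0$. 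Hence the law of indiscernibles fails and $\CS$ is not proper on positive densities.

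There is no real obstacle here beyond correctly importing the equality case of H\"older's inequality; the only point requiring a little care is that the proportionality constant $\lambda$ is genuinely free in the unnormalized setting but is pinned to $1$ once both arguments are normalized, which is exactly the mechanism separating the two cases.
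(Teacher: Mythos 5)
Your proof is correct and follows essentially the same route as the paper: both arguments invoke the equality case of the Cauchy--Schwarz (H\"older, $\alpha=\beta=2$) inequality to reduce $\CS(p(x):q(x))=0$ to $q(x)=\lambda p(x)$, then integrate to pin $\lambda=1$ in the normalized case, while the free scaling $\lambda\neq 1$ furnishes the counterexample for positive densities. No gaps; your explicit statement of the converse direction and of the counterexample simply makes precise what the paper leaves implicit.
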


\subsection{Properness and improperness}
In the general case, when $\alpha\neq2$, the divergence $D^\Holder_\alpha$
is not even proper for normalized (probability) densities, not to mention general
unnormalized (positive) densities. 
Indeed, when $p(x)=q(x)$, we have:
\begin{equation}
D^\Holder_{\alpha}(p(x):p(x))
=-\log\left(\frac{\int p(x)^2 \dx}{\left(\int p(x)^\alpha\dx\right)^{\frac{1}{\alpha}} \left(\int p(x)^{\frac{\alpha}{\alpha-1}}\dx\right)^{\frac{\alpha-1}{\alpha}}}\right) \not = 0 \mbox{ when $\alpha\not =2$}.
\end{equation}
Let us consider the general case. For unnormalized positive distributions $\tp(x)$ and $\tq(x)$ (the tilde notation stems from the notation of homogeneous coordinates in projective geometry), the inequality becomes an equality when:
$\tp(x)^\alpha \propto \tq(x)^{\beta}$, i.e., $p(x)^\alpha\propto{q(x)}^\beta$, or
$q(x)\propto{p(x)}^{\alpha/\baralpha}=p(x)^{\alpha-1}$.  
We can check that $D^\Holder_\alpha(p(x):\lambda p(x)^{\alpha-1})=0$ for any $\lambda>0$:
\begin{equation}
-\log\left(\frac{\int p(x)\lambda p(x)^{\alpha-1}\dx}
{\left(\int p(x)^\alpha\dx\right)^{\frac{1}{\alpha}}
\left(\int \lambda^\beta p(x)^{(\alpha-1)\beta}\dx\right)^{\frac{1}{\beta}}}\right)
=-\log \left(\frac{\int p(x)^\alpha\dx}{\left(\int p(x)^\alpha\dx\right)^{\frac{1}{\alpha}} \left(\int  p(x)^\alpha\dx\right)^{\frac{1}{\beta}}}\right)=0,
\end{equation}
since $(\alpha-1)\beta=(\alpha-1)\baralpha=(\alpha-1)\frac{\alpha}{\alpha-1}=\alpha$.

For $\alpha=2$, we find indeed that $D^\Holder_2(p(x):\lambda p(x))=\CS(p(x):p(x))=0$ for any
$\lambda\neq0$.   

\begin{fact}[HPD is improper]
The H\"older pseudo-divergences are improper statistical distances.
\end{fact}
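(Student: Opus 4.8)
The plan is to contradict directly the defining property of a proper divergence, namely the equivalence $D(p:q)=0 \iff p=q$, by invoking the tightness characterization of H\"older's inequality established just above. Recall that $D^\Holder_\alpha(p(x):q(x))=0$ holds exactly when $p(x)^\alpha\propto q(x)^{\baralpha}$, equivalently when $q(x)\propto p(x)^{\alpha-1}$. Since the HPD is only retained for $\alpha>1$, the goal is to show that for every such $\alpha$ this equality locus is strictly larger than the diagonal $\{q=p\}$, and that it is so for two structurally different reasons depending on whether $\alpha=2$. Accordingly I would split the argument into the two cases $\alpha\neq 2$ and $\alpha=2$, exhibiting in each a witness that breaks one of the two implications constituting properness.

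First I would treat $\alpha\neq 2$, where the obstruction is the failure of reflexivity. Here I rely on the displayed evaluation of $D^\Holder_\alpha(p(x):p(x))$: the exponent profile involving $\int p^2$, $\int p^\alpha$ and $\int p^{\baralpha}$ no longer collapses to the self-normalizing configuration of the Cauchy-Schwarz case, so $D^\Holder_\alpha(p(x):p(x))\neq 0$ for a generic density $p$. Exhibiting one such $p$ with $D^\Holder_\alpha(p:p)\neq 0$ already disqualifies $D^\Holder_\alpha$ from being proper, because any proper divergence must vanish on the diagonal. I would emphasize that this failure is insensitive to normalization, since rescaling $p$ to a probability density leaves the ratio of the three norms unchanged; hence improperness for $\alpha\neq 2$ holds on both positive and probability measures and is not an artifact of working with unnormalized densities.

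Next I would handle the boundary case $\alpha=2$, where $D^\Holder_2=\CS$ is reflexive and the previous mechanism no longer applies. Instead the relevant witness comes from the scaling family: the equality locus $q\propto p^{\alpha-1}=p$ together with the verified identity $D^\Holder_2(p(x):\lambda p(x))=0$ for every $\lambda>0$. Taking any $\lambda\neq 1$ and any positive density $p$ yields $q=\lambda p\neq p$ with $D^\Holder_2(p:q)=0$, so indiscernibility fails on the space of positive measures. This is precisely the content of the preceding Fact on the Cauchy-Schwarz divergence, which becomes proper only once normalization forces $\lambda=1$; at the level of positive densities it remains improper.

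Combining the two cases covers the whole range $\alpha>1$ under consideration and establishes the claim. I do not expect a genuine obstacle here, since both the equality characterization and the two key evaluations are already supplied in the preceding text; the only point demanding care is the bookkeeping, namely keeping the two distinct failure modes separate (broken reflexivity for $\alpha\neq 2$ versus broken indiscernibility for $\alpha=2$) and checking that together they exhaust all admissible exponents, so that no value of $\alpha>1$ escapes the conclusion that $D^\Holder_\alpha$ is improper.
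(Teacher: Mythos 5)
Your proposal is correct and mirrors the paper's own argument in \S 2.2: the paper likewise establishes improperness by showing $D^\Holder_{\alpha}(p:p)\neq 0$ for $\alpha\neq 2$ (failure of reflexivity, even for normalized densities) and by exhibiting the equality-locus witness $D^\Holder_{\alpha}(p:\lambda p^{\alpha-1})=0$, which for $\alpha=2$ reduces to your scaling witness $q=\lambda p$ breaking indiscernibility on positive measures. The only cosmetic difference is that the paper states the witness $\lambda p^{\alpha-1}$ uniformly for all $\alpha>1$ rather than splitting into cases, but the substance is identical.
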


\subsection{Reference duality}
In general, H\"older divergences are asymmetric when $\alpha\neq\beta$ ($\not =2$)  but enjoy the following {\em reference duality}~\cite{referenceduality-2015}:
\begin{equation}
D^\Holder_\alpha(p(x):q(x))=D^\Holder_{\beta}(q(x):p(x))=D^\Holder_{\frac{\alpha}{\alpha-1}}(q(x):p(x)).
\end{equation}

\begin{fact}[Reference duality HPD]
The H\"older pseudo-divergences satisfy the reference duality $\alpha\leftrightarrow\beta=\frac{\alpha}{\alpha-1}$:
$D^\Holder_\alpha(p(x):q(x))=D^\Holder_{\beta}(q(x):p(x))=D^\Holder_{\frac{\alpha}{\alpha-1}}(q(x):p(x))$.
\end{fact}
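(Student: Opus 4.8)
The plan is to prove the reference duality purely by unfolding the definition of the HPD and exploiting two symmetries: the commutativity of the integrand $p(x)q(x)=q(x)p(x)$ in the numerator, and the symmetry of the exponent-conjugacy relation $\frac{1}{\alpha}+\frac{1}{\beta}=1$. The second equality $D^\Holder_\beta(q:p)=D^\Holder_{\frac{\alpha}{\alpha-1}}(q:p)$ is immediate, since by definition $\beta=\baralpha=\frac{\alpha}{\alpha-1}$; so the entire content of the fact is the first equality $D^\Holder_\alpha(p:q)=D^\Holder_\beta(q:p)$.

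First I would write out $D^\Holder_\beta(q(x):p(x))$ from the definition. The crucial bookkeeping step is identifying the conjugate exponent of $\beta$: since the conjugacy condition $\frac{1}{\alpha}+\frac{1}{\beta}=1$ is symmetric in $\alpha$ and $\beta$, the conjugate of $\beta$ is exactly $\alpha$. Hence in $D^\Holder_\beta(q:p)$ the first density $q$ is raised to the power $\beta$ and the second density $p$ is raised to the power $\alpha$, giving
$$D^\Holder_\beta(q(x):p(x))=-\log\left(\frac{\int_\calX q(x)p(x)\dx}{\left(\int_\calX q(x)^\beta\dx\right)^{\frac{1}{\beta}}\left(\int_\calX p(x)^\alpha\dx\right)^{\frac{1}{\alpha}}}\right).$$

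Next I would compare this expression term by term with the definition of $D^\Holder_\alpha(p(x):q(x))$. The numerator $\int_\calX q(x)p(x)\dx$ equals $\int_\calX p(x)q(x)\dx$ by commutativity of pointwise multiplication. The denominator is the product of the factors $\left(\int_\calX p(x)^\alpha\dx\right)^{\frac{1}{\alpha}}$ and $\left(\int_\calX q(x)^\beta\dx\right)^{\frac{1}{\beta}}$, which is unchanged under reordering. Thus the argument of the logarithm coincides exactly with that in $D^\Holder_\alpha(p:q)$, and the two divergences are equal.

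There is essentially no substantive obstacle here: the statement is a symmetry observation rather than a computation. The only point requiring care is the exponent bookkeeping above — one must confirm that swapping the roles of $p$ and $q$ forces the accompanying swap $\alpha\leftrightarrow\beta$ of the paired exponents, which is precisely what the symmetry of the conjugacy constraint guarantees. I would also note that the domain hypotheses match: $p\in L^\alpha(\calX,\mu)$ and $q\in L^\beta(\calX,\mu)$ for $D^\Holder_\alpha(p:q)$ correspond to $q\in L^\beta(\calX,\mu)$ and $p\in L^\alpha(\calX,\mu)$ for $D^\Holder_\beta(q:p)$, so both sides are well defined on the same pairs of densities.
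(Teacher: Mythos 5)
Your proof is correct and is exactly the argument the paper implicitly relies on: the paper states this fact without further proof, treating it as immediate from the definition, and your unfolding --- numerator symmetric under $p\leftrightarrow q$, conjugacy relation $\frac{1}{\alpha}+\frac{1}{\beta}=1$ symmetric so the conjugate of $\beta$ is $\alpha$ --- is precisely that verification. The exponent bookkeeping and the remark on matching integrability hypotheses are both handled correctly.
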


An arithmetic symmetrization of the HPD yields a symmetric HPD $S^\Holder_\alpha$, given by:
\begin{eqnarray}
S^\Holder_\alpha(p(x):q(x)) &=& S^\Holder_\alpha(q(x):p(x))
=\frac{D^\Holder_\alpha(p(x):q(x))+D^\Holder_{\bar\alpha}(q(x):p(x))}{2},\nonumber\\
&=& -\log\left(
\frac{ \int p(x)q(x)\dx}{\sqrt{
\left( \int p(x)^\alpha\dx\right)^{\frac{1}{\alpha}}
 \left(\int p(x)^\baralpha\dx\right)^{\frac{1}{\baralpha}}
 \left(\int q(x)^\alpha\dx\right)^{\frac{1}{\alpha}}  \left(\int q(x)^\baralpha\dx\right)^{\frac{1}{\baralpha}}
}}
\right).
\end{eqnarray}

\subsection{HPD is a projective divergence}
In the above definition, densities $p(x)$ and $q(x)$ can either be positive or normalized probability distributions.
Let $\tp(x)$ and $\tq(x)$ denote positive (not necessarily normalized) measures, and $w(\tp)=\int_\calX \tp(x)\dx$
the {\em overall mass} so that $p(x)=\frac{\tp(x)}{w(\tp)}$ is the corresponding normalized probability measure.
Then we check that HPD is a {\em projective divergence}~\cite{GammaDiv-2008} since:
\begin{equation}
D^\Holder_\alpha(\tp(x):\tq(x)) = D^\Holder_\alpha(p(x):q(x)),
\end{equation}
or in general:
\begin{equation}
D^\Holder_\alpha(\lambda p(x):\lambda' q(x))=D^\Holder_\alpha(p(x):q(x))
\end{equation} 
for all prescribed constants $\lambda,\lambda'>0$.
Projective divergences may also be called ``{\em angular divergences}'' or ``{\em cosine divergences}'' since they do not depend on the total mass of the measure densities.

\begin{fact}[HPD is projective]
The H\"older pseudo-divergences are projective distances.
\end{fact}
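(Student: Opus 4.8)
The plan is to substitute the rescaled densities $\lambda p(x)$ and $\lambda' q(x)$ directly into the definition of $D^\Holder_\alpha$ and to exploit the homogeneity of each integral appearing in the log-ratio, showing that the multiplicative constants $\lambda$ and $\lambda'$ cancel so that the argument of the logarithm is left unchanged.

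First I would track the effect of the rescaling on the three integrals separately. The numerator scales as $\int_\calX \lambda p(x)\,\lambda' q(x)\dx = \lambda\lambda'\int_\calX p(x)q(x)\dx$, carrying one factor of $\lambda$ and one of $\lambda'$. For the first denominator factor, the power integral is homogeneous of degree $\alpha$, so $\left(\int_\calX (\lambda p(x))^\alpha\dx\right)^{1/\alpha} = (\lambda^\alpha)^{1/\alpha}\left(\int_\calX p(x)^\alpha\dx\right)^{1/\alpha} = \lambda\left(\int_\calX p(x)^\alpha\dx\right)^{1/\alpha}$; identically, the second denominator factor picks up exactly $\lambda'$. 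The key observation is that the outer exponent $1/\alpha$ precisely undoes the homogeneity degree $\alpha$, so each power-mean factor is homogeneous of degree one in its own argument.

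Consequently $\lambda$ appears to the first power in both the numerator and the first denominator factor, and $\lambda'$ to the first power in both the numerator and the second denominator factor. Taking the ratio, both constants cancel exactly, leaving the argument of the logarithm identical to that of $D^\Holder_\alpha(p(x):q(x))$; applying $-\log$ (or $+\log$ in the reverse case) then yields $D^\Holder_\alpha(\lambda p(x):\lambda' q(x)) = D^\Holder_\alpha(p(x):q(x))$ for all $\lambda,\lambda'>0$, which is the claimed projectivity. Specializing to $\lambda=1/w(\tp)$ and $\lambda'=1/w(\tq)$ recovers the invariance under normalization. There is essentially no obstacle beyond bookkeeping here; the only point worth flagging is that projectivity does \emph{not} invoke the conjugacy condition $\tfrac{1}{\alpha}+\tfrac{1}{\beta}=1$ at all, since each variable's scaling cancels on its own through the matched exponent-and-root pairing, whereas conjugacy is what guarantees non-negativity separately via H\"older's inequality.
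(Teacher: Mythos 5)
Your proof is correct and takes essentially the same approach as the paper, which simply ``checks'' projectivity via exactly this homogeneity argument: the numerator scales by $\lambda\lambda'$ while the two denominator factors scale by $\lambda$ and $\lambda'$ respectively (the outer exponents $1/\alpha$, $1/\beta$ undoing the powers $\alpha$, $\beta$), so the log-ratio is unchanged. Your closing remark---that projectivity holds independently of the conjugacy condition $\tfrac{1}{\alpha}+\tfrac{1}{\beta}=1$, which is only needed for non-negativity via H\"older's inequality---is accurate and a useful clarification.
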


\subsection{Escort distributions and skew Bhattacharyya divergences}\label{sec:escort}

Let us define with respect to the probability measures $p(x)\in L^{\frac{1}{\alpha}}(\calX,\mu)$ and $q(x)\in L^{\frac{1}{\beta}}(\calX,\mu)$ the
following {\em escort probability distributions}~\cite{IG-2016}:
\begin{equation}
p_\alpha^{E}(x)=\frac{p(x)^{\frac{1}{\alpha}}}{\int p(x)^{\frac{1}{\alpha}}\dx},
\end{equation} and 
\begin{equation}
q_\beta^E(x)=\frac{q(x)^{\frac{1}{\beta}}}{\int q(x)^{\frac{1}{\beta}} \dx}.
\end{equation}

Since HPD is a projective divergence, we compute 
with respect to the conjugate exponents $\alpha$ and $\beta$
the {\em H\"older Escort Divergence} (HED):
\begin{equation}
D^\HE_\alpha(p(x):q(x))
=D^\Holder_{\alpha}(p_\alpha^E(x):q_\beta^E(x))
=-\log \int_\calX p(x)^{1/\alpha}q(x)^{1-1/\alpha}\dx=B_{1/\alpha}(p(x):q(x)),
\end{equation}
which turns out to be the familiar {\em skew Bhattacharyya divergence} $B_{1/\alpha}(p(x):q(x))$, see~\cite{BR-2011}.

\begin{fact}[HED as a skew Bhattacharyya divergence]
The H\"older escort divergence amounts to a skew Bhattacharyya divergence:
$D^\HE_\alpha(p(x):q(x)) = B_{1/\alpha}(p(x):q(x))$ for any $\alpha>0$.
\end{fact}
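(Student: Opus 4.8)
The plan is to unfold the definition $D^\HE_\alpha(p(x):q(x)) = D^\Holder_\alpha(p_\alpha^E(x):q_\beta^E(x))$ and to exhibit a clean cancellation of the two escort normalizers, leaving exactly the skew Bhattacharyya integral. Throughout I would use that $p$ and $q$ are genuine probability densities, so that $\int_\calX p(x)\dx = \int_\calX q(x)\dx = 1$, together with the conjugacy relation $\frac{1}{\alpha}+\frac{1}{\beta}=1$.

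First I would invoke the projectivity of the HPD (already established above) to discard the escort normalizing constants. Writing $Z_p=\int_\calX p(x)^{1/\alpha}\dx$ and $Z_q=\int_\calX q(x)^{1/\beta}\dx$, projectivity yields
\[
D^\Holder_\alpha(p_\alpha^E:q_\beta^E) = D^\Holder_\alpha\!\left(\tfrac{1}{Z_p}\,p^{1/\alpha}:\tfrac{1}{Z_q}\,q^{1/\beta}\right) = D^\Holder_\alpha\!\left(p^{1/\alpha}:q^{1/\beta}\right),
\]
which reduces the task to evaluating the HPD on the unnormalized escort densities $p^{1/\alpha}$ and $q^{1/\beta}$.

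Next I would substitute these into the HPD log-ratio and observe that each denominator factor collapses to $1$: raising the base $p^{1/\alpha}$ to its conjugate exponent $\alpha$ recovers $p$, so $\int_\calX (p^{1/\alpha})^\alpha\dx = \int_\calX p(x)\dx = 1$, and symmetrically $\int_\calX (q^{1/\beta})^\beta\dx = \int_\calX q(x)\dx = 1$; hence the whole denominator is $1^{1/\alpha}\,1^{1/\beta}=1$ and only the numerator survives. The conjugacy relation then rewrites the numerator $\int_\calX p^{1/\alpha}q^{1/\beta}\dx$ as $\int_\calX p(x)^{1/\alpha}q(x)^{1-1/\alpha}\dx$, and taking $-\log$ gives precisely $B_{1/\alpha}(p(x):q(x))$ by the definition of the skew Bhattacharyya divergence.

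I do not expect a genuine obstacle, since the argument is in essence a bookkeeping of normalizing constants; the only step demanding care is the collapse of the denominator, which relies specifically on $p$ and $q$ being normalized probability measures (so that $\int p = \int q = 1$) rather than arbitrary positive measures. I would therefore state that hypothesis explicitly, note that it is consistent with the escort distributions being built from probability densities, and remark that the resulting chain of equalities is a purely algebraic identity valid for any $\alpha>0$ for which the escort normalizers and the Bhattacharyya coefficient are finite.
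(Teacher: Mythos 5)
Your proposal is correct and follows essentially the same route as the paper: the paper likewise invokes projectivity of the HPD to drop the escort normalizers $Z_p$ and $Z_q$, then collapses the denominator via $\int_\calX (p^{1/\alpha})^\alpha\dx=\int_\calX p(x)\dx=1$ (and symmetrically for $q$), leaving $-\log\int_\calX p(x)^{1/\alpha}q(x)^{1-1/\alpha}\dx=B_{1/\alpha}(p(x):q(x))$. The paper compresses this into a single displayed chain of equalities, whereas you spell out the normalization hypothesis and the cancellation explicitly, but the content is identical.
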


In particular, the {\em Cauchy-Schwarz escort divergence} $\CS^\HE(p(x):q(x))$ amounts to the Bhattacharyya distance~\cite{Bhatt-1943} $B(p(x):q(x))=-\log\sqrt{\int_\calX p(x)q(x)\dx}$: 
\begin{equation}
\CS^\HE(p(x):q(x))=D^\HE_2(p(x):q(x))=D^H_2(p^E_2(x):q^E_2(x)) =B_{1/2}(p(x):q(x))=B(p(x):q(x)).
\end{equation}

Observe that the Cauchy-Schwarz escort distributions are the square root density representations~\cite{sqrRep-2007} of distributions.

\section{Proper H\"older divergence}\label{sec:hd}

\subsection{Definition}

Let $p(x)$ and $q(x)$ be positive measures in $L^\gamma(\calX,\mu)$ for a prescribed scalar value $\gamma>0$.
Plugging the positive measures $p(x)^{\gamma/\alpha}$ and $q(x)^{\gamma/\beta}$ into the definition of
HPD $D^\Holder_\alpha$, we get the following definition:

\begin{definition}[Proper H\"older Divergence, HD]
For conjugate exponents $\alpha, \beta>0$
and $\gamma>0$, the proper  H\"older divergence between two densities $p(x)$ and $q(x)$ is defined by:
\begin{equation}
D^\Holder_{\alpha,\gamma}(p(x):q(x))
=D^\Holder_{\alpha}(p(x)^{\gamma/\alpha}:q(x)^{\gamma/\beta})
= - \log\left(\frac{\int_\calX p(x)^{\gamma/\alpha} q(x)^{\gamma/\beta} \dx}
{(\int_\calX p(x)^\gamma \dx)^{1/\alpha} (\int_\calX q(x)^\gamma \dx)^{1/\beta}}
\right).
\end{equation}
\end{definition}

By definition, $D^\Holder_{\alpha,\gamma}(p:q)$ is a \emph{two-parameter family} of
dissimilarity statistical measures.
Following H\"older's inequality, we can check that $D^\Holder_{\alpha,\gamma}(p(x):q(x))\ge0$ and
$D^\Holder_{\alpha,\gamma}(p(x):q(x))=0$ iff 
$p(x)^\gamma\propto{q}(x)^\gamma$, i.e. $p(x)\propto{q}(x)$ (see Appendix \ref{sec:hdproof}).
If $p(x)$ and $q(x)$ belong to the statistical probability manifold, then
$D^\Holder_{\alpha,\gamma}(p(x):q(x))=0$ iff $p(x)=q(x)$ almost everywhere.
This says that HD is a proper divergence for probability measures,
and it becomes a pseudo-divergence for positive measures.  Note that we have abused
the notation $D^\Holder$ to denote both the H\"older   pseudo-divergence (with
one subscript) and the H\"older   divergence (with two subscripts).

Similar to HPD, HD is asymmetric when $\alpha\neq\beta$ with the following
reference duality:
\begin{equation}
D^\Holder_{\alpha,\gamma}(p(x):q(x)) = D^\Holder_{\baralpha,\gamma}(q(x),p(x)).
\end{equation}

HD can be symmetrized as:
\begin{equation}
S^\Holder_{\alpha,\gamma}(p:q) =
\frac{D^\Holder_{\alpha,\gamma}(p:q)+D^\Holder_{\alpha,\gamma}(q:p)}{2}
=
-\log\sqrt{\frac{\int_\calX p(x)^{\gamma/\alpha} q(x)^{\gamma/\beta} \dx
\int_\calX p(x)^{\gamma/\beta} q(x)^{\gamma/\alpha} \dx}
{\int_\calX p(x)^\gamma \dx \int_\calX q(x)^\gamma \dx }}.
\end{equation}
Furthermore, one can easily check that HD is a projective divergence. 

For conjugate exponents $\alpha,\beta>0$ and $\gamma>0$, we rewrite the definition of HD as:
\begin{align*}
D^\Holder_{\alpha,\gamma}(p(x):q(x))
&= -\log
\int_\calX
\left(\frac{p(x)^{\gamma}}{\int_\calX p(x)^\gamma \dx}\right)^{1/\alpha}
\left(\frac{q(x)^{\gamma}}{\int_\calX q(x)^\gamma \dx}\right)^{1/\beta}
\dx,\\
&= -\log \left(p_{1/\gamma}^E(x)\right)^{1/\alpha} \left(q_{1/\gamma}^E(x)\right)^{1/\beta} \dx
= B_{\frac{1}{\alpha}}(p_{1/\gamma}^E(x):q_{1/\gamma}^E(x)).
\end{align*}
Therefore HD  can be reinterpreted as the skew Bhattacharyya divergence~\cite{BR-2011} between the escort distributions.
In particular, when $\gamma=1$, we get:
\begin{equation}
D^\Holder_{\alpha,1}(p(x):q(x)) = -\log\left(\int_\calX p(x)^{1/\alpha} q(x)^{1/\beta} \dx\right)
= B_{\frac{1}{\alpha}}(p(x):q(x)).
\end{equation}

\begin{fact}
The two-parametric family of statistical H\"older divergence $D^\Holder_{\alpha,\gamma}$
passes through the one-parametric family of skew Bhattacharyya divergences when $\gamma=1$.
\end{fact}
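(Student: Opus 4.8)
The plan is to prove the claimed specialization by directly evaluating the definition of $D^\Holder_{\alpha,\gamma}$ at $\gamma=1$ and recognizing the outcome as the skew Bhattacharyya divergence $B_{1/\alpha}$. The statement really bundles two assertions: that for each fixed admissible $\alpha$ the curve $\gamma=1$ produces a skew Bhattacharyya divergence, and that as $\alpha$ varies one recovers the \emph{entire} one-parameter skew Bhattacharyya family, not merely one member of it.

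For the first assertion I would substitute $\gamma=1$ into
\[
D^\Holder_{\alpha,\gamma}(p(x):q(x))
= -\log\left(\frac{\int_\calX p(x)^{\gamma/\alpha} q(x)^{\gamma/\beta}\dx}
{\left(\int_\calX p(x)^\gamma\dx\right)^{1/\alpha}\left(\int_\calX q(x)^\gamma\dx\right)^{1/\beta}}\right).
\]
For probability densities one has $\int_\calX p(x)\dx=\int_\calX q(x)\dx=1$, so the denominator collapses to $1^{1/\alpha}\,1^{1/\beta}=1$, leaving $-\log\int_\calX p(x)^{1/\alpha}q(x)^{1/\beta}\dx$. Invoking the conjugacy relation $\tfrac1\alpha+\tfrac1\beta=1$, i.e.\ $\tfrac1\beta=1-\tfrac1\alpha$, the integrand is $p(x)^{1/\alpha}q(x)^{1-1/\alpha}$, so the expression equals $B_{1/\alpha}(p(x):q(x))$ by the definition of the skew Bhattacharyya divergence. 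This is exactly the display immediately preceding the Fact, so in practice the proof only needs to assemble that identity.

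For the second assertion I would note that as $\alpha$ ranges over $(1,\infty)$ (equivalently, as the conjugate pair $\alpha,\beta$ sweeps the admissible region) the skew parameter $1/\alpha$ ranges over $(0,1)$, which is precisely the natural parameter range of the skew Bhattacharyya family; hence the image of the line $\gamma=1$ is the whole family. There is no genuine obstacle here: the argument is a substitution followed by a renormalization. The only point deserving care is that the mass terms in the denominator vanish solely because $p$ and $q$ are normalized; for general positive measures these masses persist, but since $D^\Holder_{\alpha,\gamma}$ is a projective divergence one may rescale $p$ and $q$ to unit mass without changing the value, reducing the unnormalized case to the normalized one. Alternatively, the same conclusion follows from the escort reinterpretation $D^\Holder_{\alpha,\gamma}(p:q)=B_{1/\alpha}(p_{1/\gamma}^E:q_{1/\gamma}^E)$ established above, since at $\gamma=1$ the escort map $p_{1/\gamma}^E(x)=p(x)^\gamma/\int_\calX p(x)^\gamma\dx$ is the identity on normalized densities.
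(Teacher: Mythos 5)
Your proof is correct and follows essentially the same route as the paper: substitute $\gamma=1$ into the definition of $D^\Holder_{\alpha,\gamma}$, use the conjugacy $\tfrac1\beta=1-\tfrac1\alpha$ to recognize $-\log\int_\calX p(x)^{1/\alpha}q(x)^{1-1/\alpha}\dx = B_{1/\alpha}(p(x):q(x))$, which is exactly the paper's displayed computation (the paper packages it via the escort rewriting $D^\Holder_{\alpha,\gamma}(p:q)=B_{1/\alpha}(p_{1/\gamma}^E:q_{1/\gamma}^E)$, which you also cite as an alternative). Your added remarks on handling unnormalized densities through projectivity and on $1/\alpha$ sweeping $(0,1)$ to cover the whole skew Bhattacharyya family are sound refinements of the same argument rather than a different approach.
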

 
\subsection{Special case: The Cauchy-Schwarz divergence}

We consider the intersection of the uni-parametric class of H\"older pseudo-divergences (HPD) with the bi-parametric class of proper  H\"older divergences (HD):
That is, the class of divergences which belong to both HPD and HD. Then we must have
$\gamma/\alpha=\gamma/\beta=1$. Since $1/\alpha+1/\beta=1$,
we get $\alpha=\beta=\gamma=2$. 
Therefore the Cauchy-Schwarz (CS) divergence is the {\em unique} divergence belonging to both HPD and HD classes:
\begin{equation}
D^\Holder_{2,2}(p(x):q(x))=D^\Holder_{2}(p(x):q(x))=\CS(p(x):q(x)).
\end{equation}
In fact, the CS divergence is the intersection of the four classes HPD, symmetric HPD, HD, and symmetric HD.
Figure~\ref{fig:set} displays a diagram of those divergence classes with their inclusion relationships.

\begin{figure}[t]
\centering
\includegraphics[width=.95\textwidth]{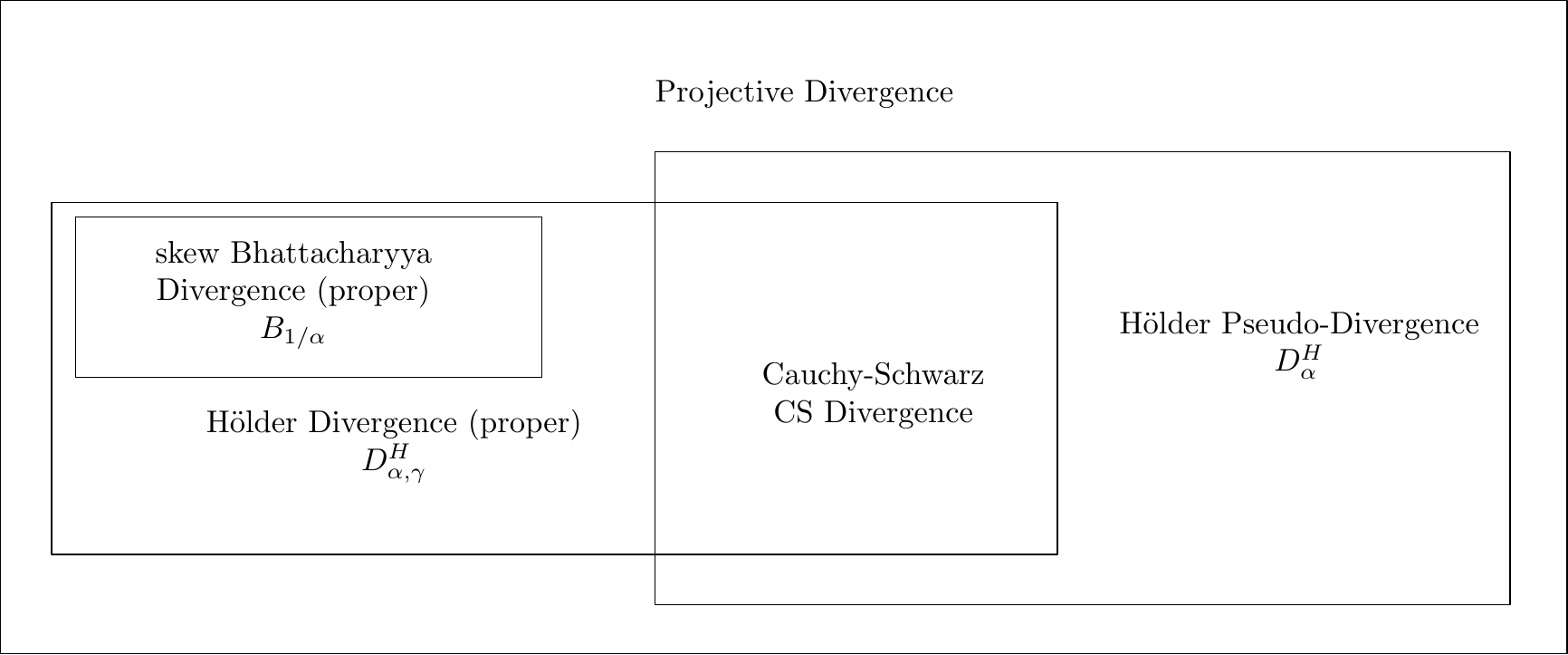}

\caption{H\"older proper divergence (bi-parametric) and H\"older improper pseudo-divergence (uni-parametric) intersect at the unique non-parametric Cauchy-Schwarz divergence.  By using escort distributions, H\"older divergences encapsulates the skew Bhattacharyya distances.}\label{fig:set}
\end{figure}

As stated earlier, notice that the Cauchy-Schwarz inequality:
\begin{equation}
\int p(x)q(x)\dx \leq \sqrt{\left(\int p(x)^2 \dx\right)\left(\int p(x)^2 \dx\right)},
\end{equation}
is not proper as it is an equality when $p(x)$ and $q(x)$ are {\em linearly dependent} (i.e., $p(x)=\lambda q(x)$ for $\lambda>0$).
The arguments of the CS divergence are square-integrable real-valued density functions $p(x)$ and $q(x)$.
Thus the Cauchy-Schwarz divergence is not proper for positive measures but is proper for normalized probability distributions since $\int p(x)\dx=\int \lambda q(x)\dx=1$ implies that $\lambda=1$.

\subsection{Limit cases of H\"older divergences and statistical estimation}

Let us define the inner product of unnormalized densities as:
\begin{equation}
\inner{\tp(x)}{\tq(x)}=\int_\calX \tp(x)\tq(x)\dx
\end{equation} 
(for $L^2(\calX,\mu)$ integrable functions),
and define the $L_\alpha$ norm of densities as
$\Vert\tp(x)\Vert_\alpha = (\int_\calX \tp(x)^{\alpha} \dx)^{1/\alpha}$ for $\alpha\geq 1$.
Then the CS divergence can be concisely written  as:
\begin{equation}
\CS(\tp(x),\tq(x))=-\log \frac{\inner{\tp(x)}{\tq(x)}}{\Vert{\tp(x)}\Vert_2 \Vert{\tq(x)}\Vert_2},
\end{equation}
and the H\"older pseudo-divergence writes as:
\begin{equation}
D^\Holder_\alpha(\tp(x),\tq(x))
= -\log \frac{\inner{\tp(x)}{\tq(x)}}{\Vert{\tp(x)}\Vert_\alpha\Vert{\tq(x)}\Vert_\baralpha}.
\end{equation}

When $\alpha\rightarrow 1^{+}$, we have $\baralpha=\alpha/(\alpha-1) \rightarrow +\infty$. 
Then it comes that:
\begin{equation}
\lim_{\alpha\to1^+} D^\Holder_\alpha(\tp(x),\tq(x))
= -\log\frac{\inner{\tp(x)}{\tq(x)}}{\Vert\tp(x)\Vert_1 \Vert\tq(x)\Vert_\infty}
= -\log{\inner{\tp(x)}{\tq(x)}} +\log\int_\calX\tp(x)\dx + \log\max_{x\in\calX} \tq(x).
\end{equation}
When $\alpha\to+\infty$ and $\baralpha\to1^+$, we have:
\begin{equation}
\lim_{\alpha\to+\infty} D^\Holder_\alpha(\tp(x),\tq(x))
= -\log\frac{\inner{\tp(x)}{\tq(x)}}{\Vert\tp(x)\Vert_\infty \Vert\tq(x)\Vert_1}
= -\log{\inner{\tp(x)}{\tq(x)}} + \log\max_{x\in\calX}\tp(x) + \log\int_\calX\tq(x)\dx.
\end{equation}

Now consider a pair of probability densities $p(x)$ and $q(x)$. We have:
\begin{align}
\lim_{\alpha\to1^+} D^\Holder_\alpha(p(x),q(x)) &= -\log\inner{p(x)}{q(x)} + \max_{x\in\calX}\log{q}(x),\nonumber\\
\lim_{\alpha\to+\infty} D^\Holder_\alpha(p,q) &= -\log\inner{p(x)}{q(x)} + \max_{x\in\calX}\log{p}(x),\nonumber\\
D^\Holder_2(p,q) &= -\log\inner{p(x)}{q(x)} + \log\Vert{p(x)}\Vert_2 + \log\Vert{q(x)}\Vert_2.
\end{align}
In an estimation scenario, $p(x)$ is fixed and $q(x\,\vert\,\theta)=q_\theta(x)$
is free along a parametric manifold $\calM$, then minimizing H\"older divergence reduces to:
\begin{align}
\argmin_{\theta\in\calM}\lim_{\alpha\to1^+} D^\Holder_\alpha(p(x),q_\theta(x))
&= \argmin_{\theta\in\calM}\bigg( -\log\inner{p(x)}{q_\theta(x)} + \max_{x\in\calX}\log{q}_\theta(x) \bigg),\nonumber\\
\argmin_{\theta\in\calM}\lim_{\alpha\to+\infty} D^\Holder_\alpha(p(x),q(x))
&= \argmin_{\theta\in\calM}\bigg( -\log\inner{p(x)}{q_\theta(x)} \bigg),\nonumber\\
\argmin_{\theta\in\calM} D^\Holder_2(p(x),q(x))
&= \argmin_{\theta\in\calM}\bigg( -\log\inner{p(x)}{q_\theta(x)} + \log\Vert{q_\theta(x)}\Vert_2\bigg).
\end{align}
Therefore when $\theta$ varies from 1 to $+\infty$, only the regularizer
in the minimization problem   changes.  In any case, H\"older divergence
always has the term $-\log\inner{p(x)}{q(x)}$, which shares a similar form with the Bhattacharyya distance~\cite{Bhatt-1943}:
\begin{equation}
B(p(x):q(x))= -\log \int_\calX \sqrt{p(x)q(x)} \dx = -\log\inner{\sqrt{p(x)}}{\sqrt{q(x)}}.
\end{equation}
HPD between $\tp(x)$ and $\tq(x)$ is also closely related to their cosine similarity
$\frac{\inner{\tp(x)}{\tq(x)}}{\Vert\tp(x)\Vert_2\Vert\tq(x)\Vert_2}$. When $\alpha=2$,
HD is exactly the cosine similarity after a non-linear transformation.

\section{Closed-form expressions of HPD and HD for conic and affine exponential families}\label{sec:HCEF}

We report closed-form formulas for the HPD and HD between two distributions belonging to the same exponential family provided that the natural parameter space is a cone or affine. A cone $\Omega$ is a convex domain such that for $P, Q\in \Omega$ and any $\lambda>0$, we have $P+\lambda Q\in\Omega$.
For example, the set of positive measures absolutely continuous with a base measure $\mu$ is a cone.
Recall that an exponential family~\cite{EF-2009} has a  density function $p(x;\theta)$ that we be written canonically as:

\begin{equation}
p(x;\theta)=\exp\left(\inner{t(x)}{\theta}-F(\theta)+k(x)\right).
\end{equation} 

In this work, we consider the auxiliary carrier measure term $k(x)=0$. The base measure is either the Lebesgue measure $\mu$ or the counting measure $\mu_C$.
A Conic or Affine Exponential Family (CAEF)  is an exponential family with the natural parameter space $\Theta$ a cone or affine.
The log-normalizer $F(\theta)$ is a strictly convex function also called cumulant generating function~\cite{IG-2016}.

\begin{lemma}[HPD and HD for CAEFs]\label{lemma:holderclosedform}
For distributions $p(x;\theta_p)$ and $p(x;\theta_q)$ belonging to the same exponential family
with conic or affine natural parameter space~\cite{DivMix-2012}, both the HPD and HD are available
in closed-form:
\begin{align}
D^\Holder_{\alpha}(p:q)
&= \frac{1}{\alpha}F(\alpha\theta_p)+ \frac{1}{\beta}F(\beta\theta_q)-F(\theta_p+\theta_q),\\
D^\Holder_{\alpha,\gamma}(p:q)
&= \frac{1}{\alpha}F(\gamma\theta_p) + \frac{1}{\beta}F(\gamma\theta_q) 
-F\left(\frac{\gamma}{\alpha}\theta_p + \frac{\gamma}{\beta}\theta_q\right).
\end{align}
\end{lemma}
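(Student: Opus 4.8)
The plan is to reduce everything to a single elementary fact about exponential families: how the normalizing constant transforms under positive scaling and addition of natural parameters. Writing $p(x;\theta)=\exp(\inner{t(x)}{\theta}-F(\theta))$ (recall $k(x)=0$), normalization gives $\int_\calX \exp(\inner{t(x)}{\theta})\dx=\exp(F(\theta))$ for every $\theta$ in the natural parameter space $\Theta$. From this I would extract two building-block identities. First, a scaling identity: for $s>0$,
\[
\int_\calX p(x;\theta)^s \dx = \exp(-sF(\theta))\int_\calX \exp(\inner{t(x)}{s\theta})\dx = \exp\!\big(F(s\theta)-sF(\theta)\big),
\]
valid whenever $s\theta\in\Theta$. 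Second, a product identity: for $a,b\ge 0$,
\[
\int_\calX p(x;\theta_p)^a\, p(x;\theta_q)^b \dx = \exp\!\big(F(a\theta_p+b\theta_q)-aF(\theta_p)-bF(\theta_q)\big),
\]
valid whenever $a\theta_p+b\theta_q\in\Theta$; this is immediate after collecting the sufficient statistics into the single exponent $\inner{t(x)}{a\theta_p+b\theta_q}$.

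Next I would assemble the HPD formula. Its numerator $\int p\,q\,\dx$ is the product identity with $a=b=1$, contributing $F(\theta_p+\theta_q)-F(\theta_p)-F(\theta_q)$ to the log. The two denominator factors are the scaling identity with $s=\alpha$ and $s=\beta$ raised to $1/\alpha$ and $1/\beta$, contributing $\tfrac1\alpha F(\alpha\theta_p)-F(\theta_p)$ and $\tfrac1\beta F(\beta\theta_q)-F(\theta_q)$. Taking $-\log$ of the ratio, the $F(\theta_p)$ and $F(\theta_q)$ terms cancel against the numerator's $-F(\theta_p)-F(\theta_q)$, leaving exactly $\tfrac1\alpha F(\alpha\theta_p)+\tfrac1\beta F(\beta\theta_q)-F(\theta_p+\theta_q)$. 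The HD formula follows the same template applied to $D^\Holder_\alpha(p^{\gamma/\alpha}:q^{\gamma/\beta})$: the numerator uses the product identity with $a=\gamma/\alpha,\,b=\gamma/\beta$ (exponent-argument $\tfrac\gamma\alpha\theta_p+\tfrac\gamma\beta\theta_q$), the denominator factors use the scaling identity with $s=\gamma$ raised to $1/\alpha$ and $1/\beta$, and again the $\tfrac\gamma\alpha F(\theta_p),\,\tfrac\gamma\beta F(\theta_q)$ terms cancel, yielding $\tfrac1\alpha F(\gamma\theta_p)+\tfrac1\beta F(\gamma\theta_q)-F(\tfrac\gamma\alpha\theta_p+\tfrac\gamma\beta\theta_q)$.

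The one nontrivial point — and the genuine role of the conic/affine hypothesis — is the legitimacy of the two identities: each requires the transformed parameter to lie in $\Theta$, so that the integral converges and equals $\exp(F(\cdot))$. I would discharge this using the structure of $\Theta$. In the conic case $\Theta$ is closed under positive scaling and under sums, so $\alpha\theta_p,\beta\theta_q,\gamma\theta_p,\gamma\theta_q\in\Theta$ and the positive combinations $\theta_p+\theta_q$ and $\tfrac\gamma\alpha\theta_p+\tfrac\gamma\beta\theta_q$ remain in $\Theta$ (for multivariate Gaussians this is simply stability of positive-definiteness under positive scaling and summation). In the affine case, such as categorical distributions where $\Theta$ is the full ambient space, every such combination is automatically admissible. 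Once this membership is granted the surviving manipulations are purely formal bookkeeping of the $F$-terms, so I expect no further obstacle; the substance of the argument is verifying these domain conditions and the clean cancellation they enable.
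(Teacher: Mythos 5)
Your proposal is correct and follows essentially the same route as the paper's proof: the same two building-block identities (the scaling identity $\int p^a\,\dx = \exp(F(a\theta)-aF(\theta))$ and the product identity $\int p^a q^b\,\dx = \exp(F(a\theta_p+b\theta_q)-aF(\theta_p)-bF(\theta_q))$), followed by the same cancellation of the $F(\theta_p)$, $F(\theta_q)$ terms. If anything, you are slightly more explicit than the paper about why the conic/affine hypothesis guarantees that the transformed parameters remain in $\Theta$, which the paper only asserts in passing.
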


\begin{proof}
Consider $k(x)=0$ and a conic or affine natural space $\Theta$ (see~\cite{DivMix-2012}), then for all $a,b>0$, we have:
\begin{equation}
\left(\int p(x)^a \dx\right)^{\frac{1}{b}} = \exp\left(\frac{1}{b}F(a\theta_p)-\frac{a}{b}F(\theta_p)\right),
\end{equation}
since $a\theta_p\in\Theta$. Indeed, we have:
\begin{align*}
\left(\int p(x)^a \dx\right)^{1/b}
&= \left( \int \exp\left(\inner{a\theta}{t(x)}-aF(\theta)\right)\dx \right)^{1/b}\\
&= \left( \int \exp\left(\inner{a\theta}{t(x)}-F(a\theta)+ F(a\theta)-aF(\theta)\right) \dx \right)^{1/b} \\
&= \exp\left(\frac{1}{b}F(a\theta)-\frac{a}{b}F(\theta)\right)
\left(\underbrace{\int\exp\left(\inner{a\theta}{t(x)}-F(a\theta)\right)\dx}_{=1}\right)^{1/b}.
\end{align*}

Similarly, we have for all ${a},{b}>0$ (details omitted),
\begin{equation}
\int p(x)^a q(x)^b\dx = \exp(F(a\theta_p+b\theta_q)-aF(\theta_p)-bF(\theta_q)),
\end{equation}
since $a\theta_p+b\theta_q\in\Theta$. 
Therefore, we get:
\begin{align}
D^\Holder_\alpha(p(x):q(x))
&= -\log \frac{\int p(x)q(x)\dx}{\left(\int p(x)^\alpha\dx\right)^{\frac{1}{\alpha}} \left(\int q(x)^\beta\dx\right)^{\frac{1}{\beta}} }\\
&= -F(\theta_p+\theta_q)+F(\theta_p)+F(\theta_q)
+ \frac{1}{\alpha}F(\alpha\theta_p)- F(\theta_p)
+  \frac{1}{\beta}F(\beta\theta_q)- F(\theta_q)\\
&=   \frac{1}{\alpha}F(\alpha\theta_p)+ \frac{1}{\beta}F(\beta\theta_q)-F(\theta_p+\theta_q) \geq 0,\\
D^\Holder_{\alpha,\gamma}(p(x):q(x))
&= -\log \frac{\int p(x)^{\gamma/\alpha}q(x)^{\gamma/\beta}\dx}
{\left(\int p(x)^\gamma\dx\right)^{\frac{1}{\alpha}} \left(\int q(x)^\gamma\dx\right)^{\frac{1}{\beta}} }\\
&=-F\left(\frac{\gamma}{\alpha}\theta_p + \frac{\gamma}{\beta}\theta_q\right)
+\frac{\gamma}{\alpha}F(\theta_p) + \frac{\gamma}{\beta}F(\theta_q)
+\frac{1}{\alpha}F(\gamma\theta_p) - \frac{\gamma}{\alpha}F(\theta_p)
+\frac{1}{\beta}F(\gamma\theta_q) - \frac{\gamma}{\beta}F(\theta_q)\\
&=
\frac{1}{\alpha}F(\gamma\theta_p) + \frac{1}{\beta}F(\gamma\theta_q) 
-F\left(\frac{\gamma}{\alpha}\theta_p + \frac{\gamma}{\beta}\theta_q\right)\geq 0.
\end{align}

When $1>\alpha>0$, we have $\beta=\frac{\alpha}{\alpha-1}<0$. To get similar
results for the reverse H\"older divergence, we need the natural parameter
space to be affine (eg., isotropic Gaussians or multinomials, see~\cite{fdivChi-2014}).
\end{proof}

In particular, if $p(x)$ and $q(x)$ belong to the  same exponential family so
that $p(x)=\exp(\inner{\theta_p}{t(x)} - F(\theta_p))$ and 
$q(x)=\exp(\inner{\theta_q}{t(x)}-F(\theta_q))$, one can easily check that $D^\Holder_\alpha(p(x;\theta_p):p(x;\theta_q))=0$ iff $\theta_q=(\alpha-1)\theta_p$.
For HD, we can check $D^\Holder_{\alpha,\gamma}(p(x):p(x))=0$ is proper since $\frac{1}{\alpha}+\frac{1}{\beta}=1$.

The following result is straightforward from Lemma~\ref{lemma:holderclosedform}.
\begin{lemma}[Symmetric HPD and HD for CAEFs]
For distributions $p(x;\theta_p)$ and $p(x;\theta_q)$ belonging to the same exponential family
with conic or affine natural parameter space~\cite{DivMix-2012}, the symmetric HPD and HD are available
in closed-form:
\begin{align}
S^\Holder_{\alpha}(p(x):q(x))
&=
\frac{1}{2}\left[
\frac{1}{\alpha}F(\alpha\theta_p) +\frac{1}{\beta}F(\beta\theta_p)
+\frac{1}{\alpha}F(\alpha\theta_q) + \frac{1}{\beta}F(\beta\theta_q)
\right] -F(\theta_p+\theta_q); \\
S^\Holder_{\alpha,\gamma}(p(x):q(x))
&= \frac{1}{2}
\left[ F(\gamma\theta_p) + F(\gamma\theta_q) 
-F\left(\frac{\gamma}{\alpha}\theta_p + \frac{\gamma}{\beta}\theta_q\right)
-F\left(\frac{\gamma}{\beta}\theta_p + \frac{\gamma}{\alpha}\theta_q\right)\right].
\end{align}
\end{lemma}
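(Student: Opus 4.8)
The plan is to obtain both closed forms by symmetrizing the directional formulas of Lemma~\ref{lemma:holderclosedform}; no new computation is required beyond substitution and collecting terms. Recall that the symmetric divergences are the arithmetic means of the two argument orders at fixed exponent $\alpha$,
\begin{equation*}
S^\Holder_{\alpha}(p:q)=\tfrac{1}{2}\bigl(D^\Holder_{\alpha}(p:q)+D^\Holder_{\alpha}(q:p)\bigr),\qquad S^\Holder_{\alpha,\gamma}(p:q)=\tfrac{1}{2}\bigl(D^\Holder_{\alpha,\gamma}(p:q)+D^\Holder_{\alpha,\gamma}(q:p)\bigr),
\end{equation*}
so the only inputs needed are the two CAEF expressions from Lemma~\ref{lemma:holderclosedform} together with their images under the swap $\theta_p\leftrightarrow\theta_q$.

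For the symmetric HPD, first I would write down $D^\Holder_{\alpha}(p:q)=\frac{1}{\alpha}F(\alpha\theta_p)+\frac{1}{\beta}F(\beta\theta_q)-F(\theta_p+\theta_q)$ and its swap $D^\Holder_{\alpha}(q:p)=\frac{1}{\alpha}F(\alpha\theta_q)+\frac{1}{\beta}F(\beta\theta_p)-F(\theta_p+\theta_q)$. The coupling term $-F(\theta_p+\theta_q)$ is invariant under the exchange, so it passes through the average unchanged; the four single-argument terms simply regroup into $\tfrac{1}{2}[\frac{1}{\alpha}F(\alpha\theta_p)+\frac{1}{\beta}F(\beta\theta_p)+\frac{1}{\alpha}F(\alpha\theta_q)+\frac{1}{\beta}F(\beta\theta_q)]$, which is the claimed identity.

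For the symmetric HD I would proceed identically, averaging $D^\Holder_{\alpha,\gamma}(p:q)=\frac{1}{\alpha}F(\gamma\theta_p)+\frac{1}{\beta}F(\gamma\theta_q)-F(\frac{\gamma}{\alpha}\theta_p+\frac{\gamma}{\beta}\theta_q)$ with its swap $D^\Holder_{\alpha,\gamma}(q:p)=\frac{1}{\alpha}F(\gamma\theta_q)+\frac{1}{\beta}F(\gamma\theta_p)-F(\frac{\gamma}{\beta}\theta_p+\frac{\gamma}{\alpha}\theta_q)$. Here the two coupling terms are genuinely distinct --- they differ by the exponent exchange $\alpha\leftrightarrow\beta$ --- so both survive and appear in the final expression. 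The single simplification, and the only step that is not pure bookkeeping, is to invoke exponent conjugacy $\frac{1}{\alpha}+\frac{1}{\beta}=1$: the diagonal pair $\frac{1}{\alpha}F(\gamma\theta_p)+\frac{1}{\beta}F(\gamma\theta_p)$ collapses to $F(\gamma\theta_p)$, and likewise $\frac{1}{\alpha}F(\gamma\theta_q)+\frac{1}{\beta}F(\gamma\theta_q)=F(\gamma\theta_q)$, giving the stated $\tfrac{1}{2}[F(\gamma\theta_p)+F(\gamma\theta_q)-F(\frac{\gamma}{\alpha}\theta_p+\frac{\gamma}{\beta}\theta_q)-F(\frac{\gamma}{\beta}\theta_p+\frac{\gamma}{\alpha}\theta_q)]$.

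In short, there is no real obstacle: both identities are immediate from Lemma~\ref{lemma:holderclosedform} by substitution and term collection. The one point demanding care is correctly tracking the roles of $\alpha$ and $\beta$ (equivalently $\theta_p$ and $\theta_q$) under the argument swap, since an index slip there is the only plausible source of error; the conjugacy relation $\frac{1}{\alpha}+\frac{1}{\beta}=1$ is what produces the clean $F(\gamma\theta_p)+F(\gamma\theta_q)$ in the HD case.
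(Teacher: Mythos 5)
Your proof is correct and is exactly the argument the paper intends: the paper disposes of this lemma with ``straightforward from Lemma~\ref{lemma:holderclosedform},'' meaning precisely your computation of averaging the two directional closed forms, noting that the coupling term $-F(\theta_p+\theta_q)$ is swap-invariant in the HPD case, and invoking conjugacy $\frac{1}{\alpha}+\frac{1}{\beta}=1$ to collapse $\frac{1}{\alpha}F(\gamma\theta_p)+\frac{1}{\beta}F(\gamma\theta_p)$ to $F(\gamma\theta_p)$ in the HD case. Your symmetrization convention $S^\Holder_\alpha(p:q)=\tfrac{1}{2}\bigl(D^\Holder_\alpha(p:q)+D^\Holder_\alpha(q:p)\bigr)$ agrees (via the reference duality $D^\Holder_\alpha(q:p)=D^\Holder_{\baralpha}(p:q)$) with the explicit integral expression the paper gives for $S^\Holder_\alpha$, so there is no discrepancy with the paper's definition.
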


\begin{remark}
By reference duality,
\begin{align*}
S^\Holder_{\alpha}(p(x):q(x))        &= S^\Holder_{\baralpha}(p(x):q(x));\\
S^\Holder_{\alpha,\gamma}(p(x):q(x)) &= S^\Holder_{\baralpha,\gamma}(p(x):q(x)).
\end{align*}
\end{remark}

Note that the H\"older score-induced divergence~\cite{scaleinvariantDiv-2014} does {\em not} admit in general closed-form formulas  for exponential families since it relies on a {\em  function} $\phi(\cdot)$ (see Definition~4 of~\cite{scaleinvariantDiv-2014}).

Note that CAEF convex log-normalizers satisfy:
\begin{equation}
\frac{1}{\alpha}F(\alpha\theta_p)+ \frac{1}{\beta}F(\beta\theta_q) \geq F(\theta_p+\theta_q).
\end{equation}
A necessary condition is that $F(\lambda\theta)\geq \lambda F(\theta)$ for $\lambda>0$ (take $\theta_p=\theta$, $\theta_q=0$ and $F(0)=0$  in the above equality).

The escort distribution for an exponential family is given by:
\begin{equation}
p_\alpha^E(x;\theta) =
e^{\frac{F(\theta)}{\alpha}-F(\frac{\theta}{\alpha})}
p(x;\theta)^{\frac{1}{\alpha}}.
\end{equation}

The H\"older equality holds when $p(x)^\alpha \propto q(x)^\beta$ or $p(x)^\alpha q(x)^{-\beta}  \propto  1 $.
For exponential families, this condition is satisfied when $\alpha\theta_p - \beta\theta_q\in\Theta$.
That is, we need to have:

\begin{equation}
\alpha \left(\theta_p -  \frac{1}{\alpha-1}\theta_q\right) \in\Theta.
\end{equation}

Thus we may choose small enough $\alpha=1+\epsilon>1$ so that the condition is not satisfied for fixed $\theta_p$ and $\theta_q$ for many exponential distributions.
Since multinomials have affine natural space~\cite{fdivChi-2014}, this condition is always met, but not for non-affine natural parameter spaces like normal distributions.

Notice the following fact:
\begin{fact}[Density of a CAEF in $L^\gamma(\calX,\mu)$]
The density of exponential families with conic or affine natural parameter space belongs to $L^\gamma(\calX,\mu)$ for any $\gamma>0$.
\end{fact}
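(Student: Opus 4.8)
The plan is to reduce membership in $L^\gamma(\calX,\mu)$ to the finiteness of the log-normalizer at a scaled parameter, and then read off that finiteness from the conic or affine structure. By definition $p(x;\theta)\in L^\gamma(\calX,\mu)$ means $\int_\calX p(x;\theta)^\gamma\dx<\infty$, and since the density is positive its absolute value equals itself. First I would substitute the canonical form with $k(x)=0$, giving $p(x;\theta)^\gamma=\exp\bigl(\inner{t(x)}{\gamma\theta}-\gamma F(\theta)\bigr)$, so that
\[
\int_\calX p(x;\theta)^\gamma\dx
=\exp\bigl(-\gamma F(\theta)\bigr)\int_\calX\exp\bigl(\inner{t(x)}{\gamma\theta}\bigr)\dx
=\exp\bigl(F(\gamma\theta)-\gamma F(\theta)\bigr),
\]
valid precisely when $\gamma\theta$ belongs to the natural parameter space $\Theta$. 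This is the specialization $a=\gamma$, $b=1$ of the identity already established in the proof of Lemma~\ref{lemma:holderclosedform}.

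The only substantive step that remains is to check that $\gamma\theta\in\Theta$ for every $\gamma>0$ whenever $\theta\in\Theta$, because $\Theta$ is exactly the set on which $F$ is finite. This is where the hypothesis on $\Theta$ is used. When $\Theta$ is a cone it is closed under positive scaling, so $\gamma\theta\in\Theta$ at once; this covers, e.g., multivariate Gaussians, whose negative-definite precision block stays negative definite under any positive rescaling. When $\Theta$ is affine, such as the full space $\mathbb{R}^d$ arising for categorical distributions, it is stable under arbitrary scaling and again $\gamma\theta\in\Theta$. In both cases $F(\gamma\theta)<\infty$, so the integral above is finite and $p(x;\theta)\in L^\gamma(\calX,\mu)$ for all $\gamma>0$.

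I expect the closure property $\gamma\theta\in\Theta$ to be the sole genuine obstacle and the heart of the statement: it is exactly what fails for a generic exponential family, whose natural parameter space need not be stable under scaling, so that finiteness of $\int_\calX p^\gamma\dx$ could break down for some $\gamma$. The conic-or-affine assumption is precisely the structural input that removes this difficulty, mirroring its role in securing the closed-form expressions of Lemma~\ref{lemma:holderclosedform}.
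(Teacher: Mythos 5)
Your proof is correct and takes essentially the same route as the paper's own proof: compute $\int_\calX p(x;\theta)^\gamma \dx = e^{F(\gamma\theta)-\gamma F(\theta)}$, observe that finiteness requires only $\gamma\theta\in\Theta$, and invoke the conic or affine structure of $\Theta$ (which, in the paper's usage, e.g.\ $\Theta=\mathbb{R}^m$ for categorical distributions, is closed under positive scaling) to conclude. The additional detail you give on why $\gamma\theta\in\Theta$ holds is precisely the step the paper leaves implicit, so there is nothing further to add.
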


\begin{proof}
We have $\int_\calX (\exp(\inner{\theta}{t(x)}-F(\theta)))^\gamma \dmu(x)= e^{F(\gamma\theta)-\gamma F(\theta)}<\infty$ for any $\gamma>0$ provided
that $\gamma\theta$ belongs to the natural parameter space. When $\Theta$ is a cone or affine, the condition is satisfied.
\end{proof}

Let $\tp(x;\theta)=\exp\left(\inner{t(x)}{\theta)}\right)$ denote the unnormalized positive exponential family density and 
$p(x;\theta)=\frac{\tp(x;\theta)}{Z(\theta)}$ the normalized density with $Z(\theta)=\exp(F(\theta))$ the partition function.
Although HD is a projective divergence since we have $D^\Holder_\alpha(p(x;\theta_1):p(x;\theta_2)) =D^\Holder_\alpha(\tp(x;\theta_1):\tp(x;\theta_2))$,
observe that the HD value {\em depends} on the log-normalizer $F(\theta)$ (since the HD is an integral on the support, see also~\cite{PM-ProjDiv-2016} for a similar argument with the $\gamma$-divergence~\cite{GammaDiv-2008}). 

In practice, even when the log-normalizer is computationally intractable, we may still estimate the HD by Monte-Carlo techniques:
Indeed, we can sample a distribution $\tp(x)$ either by rejection sampling~\cite{PM-ProjDiv-2016} or by the Markov Chain Monte-Carlo (MCMC) Metropolis-Hasting technique: It just requires to be able to sample a proposal distribution that has the same support.

We shall now instantiate the HPD and HD formulas for several exponential families with conic or affine natural parameter spaces.

\subsection{Case study: Categorical distributions}

Let $p=(p_0,\cdots,p_m)$
and $q=(q_0,\cdots,q_m)$ be two categorical distributions in the 
$m$-dimensional probability simplex $\Delta^m$. We rewrite $p$
in the canonical form of exponential families~\cite{EF-2009} as:
\begin{equation}\label{eq:pcat}
p_i = \exp\left((\theta_p)_i - \log\left(1+\sum_{i=1}^m\exp(\theta_p)_i\right)\right),
\quad\forall{i}\in\{1,\cdots,m\},
\end{equation}
with the redundant parameter:
\begin{equation}
p_0 = 1-\sum_{i=1}^mp_i = 
\frac{1}{1+\sum_{i=1}^m\exp(\theta_p)_i}.
\end{equation}
From Eq. \ref{eq:pcat}, the convex cumulant generating function has the form $F(\theta)=\log\left(1+\sum_{i=1}^m\exp(\theta_p)_i\right)$.
The inverse transformation from $p$ to $\theta$ is therefore given by:
\begin{equation}
\theta_i = \log\left(\frac{p_i}{p_0}\right),
\quad\forall{i}\in\{1,\cdots,m\}.
\end{equation}
The natural parameter space $\Theta$ is affine (hence conic), and by applying Lemma~\ref{lemma:holderclosedform}, we get the following closed-form formula:
\begin{align}
D^\Holder_\alpha(p:q) =&
  \frac{1}{\alpha}\log\left(1+\sum_{i=1}^m\exp(\alpha(\theta_p)_i)\right)
+ \frac{1}{\beta}\log\left(1+\sum_{i=1}^m\exp(\beta(\theta_q)_i)\right)\nonumber\\
&
- \log\left(1+\sum_{i=1}^m\exp((\theta_p)_i+(\theta_q)_i)\right),\\
D^\Holder_{\alpha,\gamma}(p:q) =&
\frac{1}{\alpha}\log\left(1+\sum_{i=1}^m\exp(\gamma(\theta_p)_i)\right)
+ \frac{1}{\beta}\log\left(1+\sum_{i=1}^m\exp(\gamma(\theta_q)_i)\right)\nonumber\\
&
- \log\left(1+\sum_{i=1}^m\exp\left(\frac{\gamma}{\alpha}(\theta_p)_i+\frac{\gamma}{\beta}(\theta_q)_i\right)\right).
\end{align}

\begin{figure}[t]
 \centering
  \begin{subfigure}[b]{\textwidth}
    \centering\includegraphics[width=\textwidth]{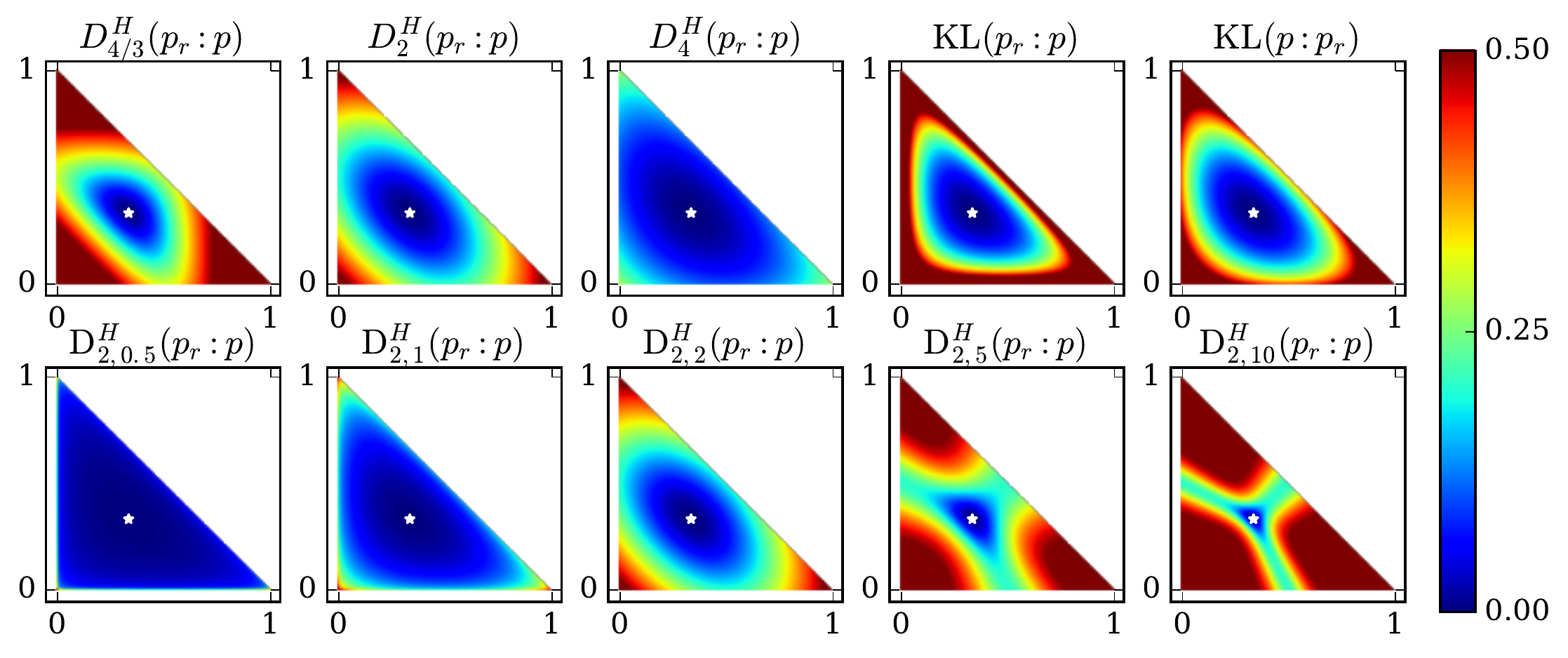}
    \caption{Reference categorical distribution $p_r=(1/3,1/3,1/3)$\label{subfig:cat1}}
  \end{subfigure}
  \begin{subfigure}[b]{\textwidth}
    \centering\includegraphics[width=\textwidth]{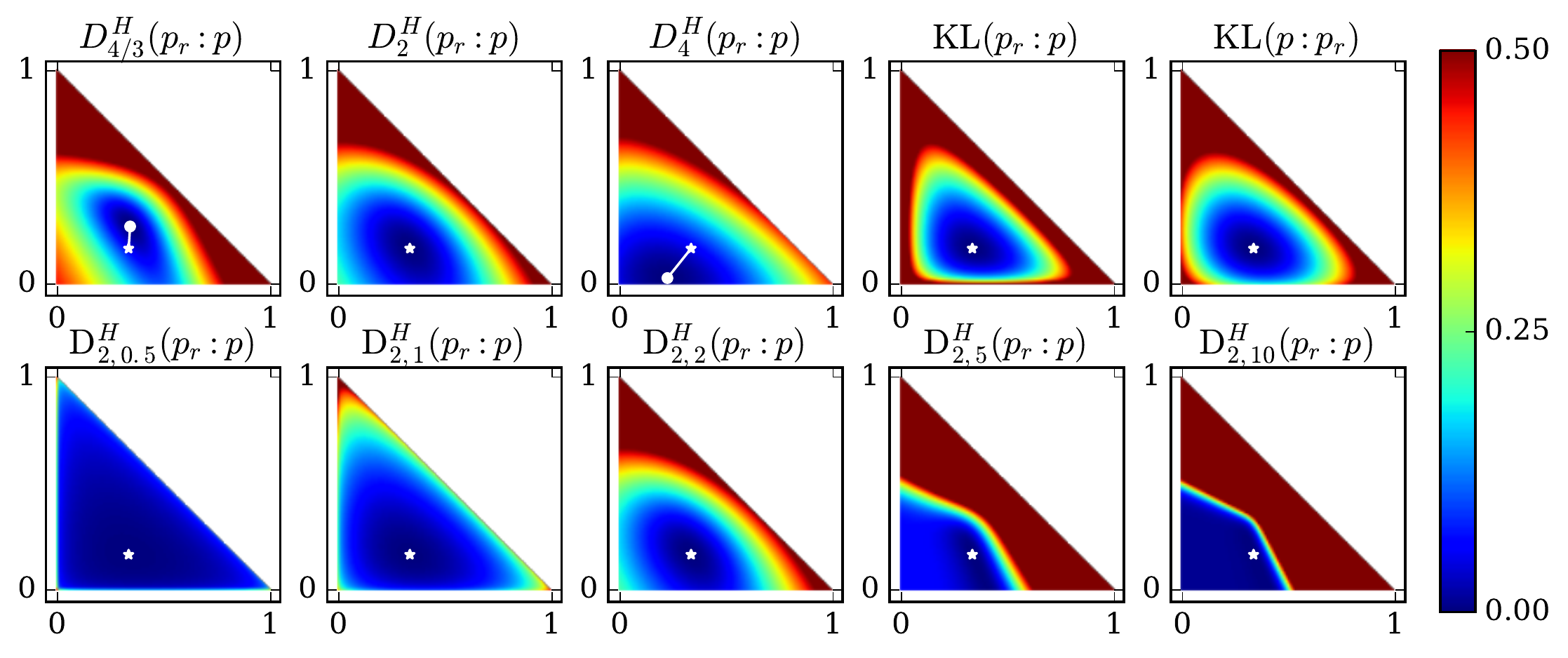}
    \caption{Reference categorical distribution $p_r=(1/2,1/3,1/6)$\label{subfig:cat2}}
  \end{subfigure}
\caption{$D^\Holder_\alpha(p_r:p)$ for $\alpha\in\{4,2,4/3\}$,
and $D^\Holder_{2,\gamma}(p_r:p)$ for $\gamma\in\{0.5,1,2,5,10\}$,
compared to KL divergence.
The reference distribution $p_r$ is presented as ``$\star$''. The minimizer
of $D^\Holder_\alpha(p_0:p)$, if different from $p_0$, is presented as
``$\bullet$''. Notice that $D^\Holder_{2,2}=D^\Holder_2$.\label{fig:cat}}
\end{figure}

To get some intuitions,  Fig.~\ref{fig:cat} shows the H\"older divergence from any
 categorical distribution $(p_0,p_1,p_2)$ to a given reference distribution $p_r$ in the 2D probability simplex $\Delta^2$.
A main observation is that the Kullback-Leibler (KL) divergence exhibits a barrier near the boundary $\partial\Delta^2$ with
large values. This is {\em not} the case for H\"older divergences: $D^\Holder_\alpha(p:p_r)$
does not have a sharp increase near the boundary (although it still penalizes the corners
of $\Delta^2$). For example, let $p=(0,1/2,1/2)$, $p_r=(1/3,1/3,1/3)$, then 
$\KL(p_r:p)\to\infty$ but $D^\Holder_2(p_r:p)=\sqrt{2/3}$.
Another observation is that the minimum $D(p:p_r)$ can be reached at some point $p\neq{p_r}$ (see for example $D^\Holder_4(p:p_r)$ in Fig.~\ref{subfig:cat2}, the bluest area corresponding to the minimum of $D(p:p_r)$ is not in the same location as the reference point).

Consider a HPD ball of center $c$ and prescribed radius $r$ wrt the HPD.
Since $p(x)^{\alpha-1}$  for $\alpha\not =2$ does not belong to the probability manifold but to the positive measure manifold and since the distance is projective, 
we deduce that the displaced ball center $c'$ of a ball $c$ lying the probability manifold can be computed as the
intersection of the ray $0p^{\alpha-1}$ with the probability manifold.
For the discrete probability simplex $\Delta$, since we have $\lambda \sum_{x\in\calX} p(x)^{\alpha-1}=1$, we deduce that the displaced ball center is at:

\begin{equation}
c'=\frac{c}{\sum_{x\in\calX} p(x)^{\alpha-1}}
\end{equation}
This center is  displayed as ``$\bullet$'' in Figure~\ref{fig:cat}.

In general, the HPD bisector~\cite{skewJB} between two distributions belonging to the same CAEF is defined by:
\begin{equation}
\frac{1}{\alpha} (F(\alpha\theta_1) - F(\alpha\theta_2)) = F(\theta_2+\theta) - F(\theta_1+\theta).
\end{equation}

\subsection{Case study: Bernoulli distribution}
Bernoulli distribution is just a special case of the category distribution
when the number of categories is $2$  (i.e., $m=1$).
To be consistent with the previous section, we rewrite a Bernoulli distribution $p=(p_0,p_1)$
in the canonical form:
\begin{equation} 
p_1 = \exp\left(\theta_p - \log\left(1+\exp(\theta_p)\right)\right)=\frac{\exp(\theta_p)}{1+\exp(\theta_p)}, 
\end{equation}
so that
\begin{equation} 
p_0 = \frac{1}{1+\exp(\theta_p)}. 
\end{equation}
Then the cumulant generating function becomes $F(\theta_p)=\log\left(1+\exp(\theta_p)\right)$.
By Lemma~\ref{lemma:holderclosedform},
\begin{align}
D^\Holder_\alpha(p(x):q(x))
&= \frac{1}{\alpha} \log\left(1+\exp(\alpha\theta_p)\right)
+ \frac{1}{\beta} \log\left(1+\exp(\beta\theta_q)\right)
- \log\left(1+\exp(\theta_p+\theta_q)\right),\\
D^\Holder_{\alpha,\gamma}(p(x):q(x))
&= \frac{1}{\alpha} \log\left(1+\exp(\gamma\theta_p)\right)
+ \frac{1}{\beta} \log\left(1+\exp(\gamma\theta_q)\right)
- \log\left(1+\exp\left(\frac{\gamma}{\alpha}\theta_p+\frac{\gamma}{\beta}\theta_q\right)\right).
\end{align}

\subsection{Case study: MultiVariate Normal distributions (MVNs)}

Let us now instantiate the formulas  for multivariate normals (Gaussian distributions).
We have the log-normalizer $F(\theta)$ expressed using the usual parameters as~\cite{SM-2011}:
\begin{equation}
F(\theta)=F(\mu(\theta),\Sigma(\theta))=\frac{1}{2} \log (2\pi)^d \vert\Sigma\vert + \frac{1}{2}\mu^\top\Sigma^{-1}\mu.
\end{equation}
Since
\begin{equation}
\theta=(\Sigma^{-1}\mu, -\frac{1}{2}\Sigma^{-1})=(v,M),\quad \mu=-\frac{1}{2}M^{-1}v,\quad \Sigma=-\frac{1}{2}M^{-1}.
\end{equation}

It follows that:

\begin{equation}
\theta_p+\theta_q=\theta_{p+q}
=(v_p+v_q,M_p+M_q)=\left(\Sigma^{-1}_p\mu_p+\Sigma^{-1}_q\mu_q,-\frac{1}{2}\Sigma^{-1}_p-\frac{1}{2}\Sigma^{-1}_q\right).
\end{equation}

Therefore, we have:

\begin{equation}
\mu_{p+q}=(\Sigma_p^{-1}+ \Sigma_q^{-1})^{-1}(\Sigma^{-1}_p\mu_p+\Sigma^{-1}_q\mu_q),\quad \Sigma_{p+q}=(\Sigma_p^{-1}+ \Sigma_q^{-1})^{-1}
\end{equation}

\begin{figure}[t]
\includegraphics[width=\textwidth]{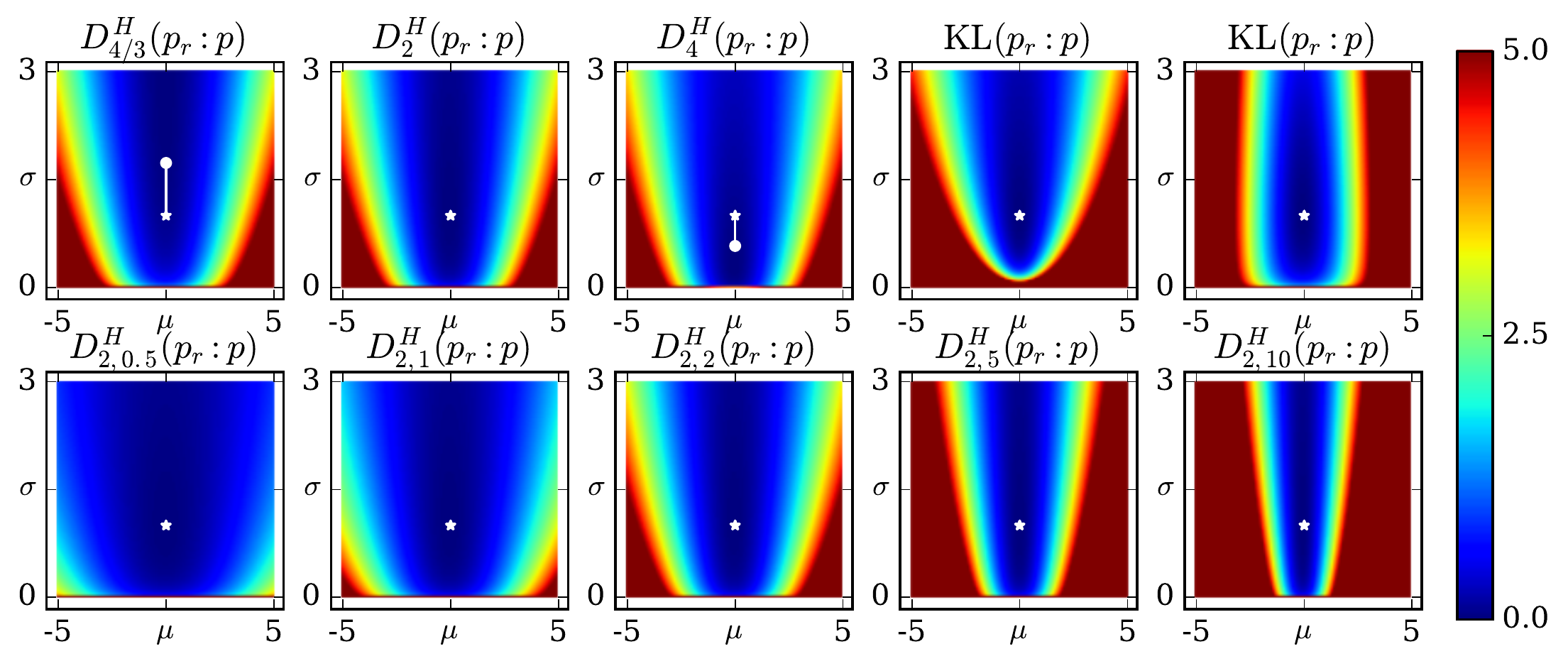}
\caption{First row: $D^\Holder_\alpha(p_r:p)$, where $p_r$ is standard Gaussian distribution, and $\alpha\in\{4/3,2,4\}$
as compared to the KL divergence. Second row: $D^\Holder_{2,\gamma}(p_r:p)$ for $\gamma\in\{0.5,1,2,5,10\}$.
Notice that $D^\Holder_{2,2}=D^\Holder_2$.}\label{fig:gauss}
\end{figure}

We thus get the following closed-form formula for $p~\sim N(\mu_p,\Sigma_p)$ and $q~\sim N(\mu_q,\Sigma_q)$:

\begin{align}
&D^\Holder_\alpha(N(\mu_p,\Sigma_p):N(\mu_q,\Sigma_q))=
\frac{1}{2\alpha}\log\left\vert\frac{\Sigma_p}{\alpha}\right\vert + \frac{1}{2}\mu_p^\top\Sigma_p^{-1}\mu_p
+
\frac{1}{2\beta}\log\left\vert\frac{\Sigma_q}{\beta}\right\vert + \frac{1}{2}\mu_q^\top\Sigma_q^{-1}\mu_q\nonumber\\
&\hspace{5em}+
\frac{1}{2}\log\left\vert\Sigma_p^{-1}+\Sigma_q^{-1}\right\vert - \frac{1}{2}
(\Sigma^{-1}_p\mu_p+\Sigma^{-1}_q\mu_q)^\top(\Sigma_p^{-1}+ \Sigma_q^{-1})^{-1}(\Sigma^{-1}_p\mu_p+\Sigma^{-1}_q\mu_q);\\
&D^\Holder_{\alpha,\gamma}(N(\mu_p,\Sigma_p):N(\mu_q,\Sigma_q))
=
\frac{1}{2\alpha}\log\left\vert\frac{\Sigma_p}{\gamma}\right\vert + \frac{\gamma}{2\alpha}\mu_p^\top\Sigma_p^{-1}\mu_p
+
\frac{1}{2\beta}\log\left\vert\frac{\Sigma_q}{\gamma}\right\vert + \frac{\gamma}{2\beta}\mu_q^\top\Sigma_q^{-1}\mu_q\nonumber\\
&\hspace{5em}+
\frac{1}{2}\log\left\vert\frac{\gamma}{\alpha}\Sigma_p^{-1}+\frac{\gamma}{\beta}\Sigma_q^{-1}\right\vert - \frac{1}{2}
\left(\frac{\gamma}{\alpha}\Sigma^{-1}_p\mu_p+\frac{\gamma}{\beta}\Sigma^{-1}_q\mu_q\right)^\top
\left(\frac{\gamma}{\alpha}\Sigma_p^{-1}+ \frac{\gamma}{\beta}\Sigma_q^{-1}\right)^{-1}
\left(\frac{\gamma}{\alpha}\Sigma^{-1}_p\mu_p+\frac{\gamma}{\beta}\Sigma^{-1}_q\mu_q\right).
\end{align}

Figure \ref{fig:gauss} shows HPD and HD for univariate Gaussian distributions as
compared to the KL divergence.

\subsection{Case study: Zero-centered Laplace distribution}

The zero-centered Laplace distribution is defined on the support $(-\infty,\infty)$
with the pdf:
\begin{equation}
p(x; \sigma)
=\frac{1}{2\sigma}\exp\left(-\frac{\vert{x}\vert}{\sigma}\right)
=\exp\left(-\frac{\vert{x}\vert}{\sigma}-\log(2\sigma)\right).
\end{equation}

We have $\theta=-\frac{1}{\sigma}$, $F(\theta)=\log(-\frac{2}{\theta})$.
Therefore, it comes that:
\begin{align}
D^\Holder_\alpha(p(x):q(x)) &=
\frac{1}{\alpha} \log\left(-\frac{2}{\alpha\theta_p}\right)
+ \frac{1}{\beta} \log\left(-\frac{2}{\beta\theta_q}\right)
- \log\left(-\frac{2}{\theta_p+\theta_q}\right)\nonumber \\
&=
\frac{1}{\alpha} \log\left(\frac{\sigma_p}{\alpha}\right)
+ \frac{1}{\beta} \log\left(\frac{\sigma_q}{\beta}\right)
- \log\left(\frac{\sigma_p\sigma_q}{\sigma_p+\sigma_q}\right),\\
D^\Holder_{\alpha,\gamma}(p(x):q(x)) &=
\frac{1}{\alpha} \log\left(-\frac{2}{\gamma\theta_p}\right)
+ \frac{1}{\beta} \log\left(-\frac{2}{\gamma\theta_q}\right)
- \log\left(-\frac{2}{\frac{\gamma}{\alpha}\theta_p+\frac{\gamma}{\beta}\theta_q}\right) \nonumber\\
&=
\frac{1}{\alpha} \log\sigma_p + \frac{1}{\beta} \log\sigma_q
-\log\left(\frac{\sigma_p\sigma_q}{\frac{1}{\beta}\sigma_p+\frac{1}{\alpha}\sigma_p}\right).
\end{align}
In this special case, 
$D^\Holder_{\alpha,\gamma}(p(x):q(x))$ does not vary with $\gamma$.

\subsection{Case study: Wishart distribution}

The Wishart distribution is defined on the $d\times{d}$ positive definite
cone with the density:
\begin{equation}
p(X; n, S) =
\frac{ \vert{X}\vert^{\frac{n-d-1}{2}} \exp\left(-\frac{1}{2}tr(S^{-1}X)\right)}
{2^{\frac{nd}{2}}\vert{S}\vert^{\frac{n}{2}}\Gamma_d\left(\frac{n}{2}\right)},
\end{equation}
where $n>d-1$ is the degree of freedom, and $S\succ0$ is a scale matrix.
We rewrite it into the canonical form:
\begin{equation}
p(X; n, S) =
\exp\left( -\frac{1}{2}tr(S^{-1}X)
+\frac{n-d-1}{2}\log\vert{X}\vert
-\frac{nd}{2}\log2
-\frac{n}{2}\log\vert{S}\vert
-\log\Gamma_d\left(\frac{n}{2}\right)
\right).
\end{equation}
We can see that $\theta=(\theta^1,\theta^2)$,
$\theta^1=-\frac{1}{2}S^{-1}$, $\theta^2=\frac{n-d-1}{2}$,
and
\begin{align}
F(\theta)
&=\frac{nd}{2}\log2 +\frac{n}{2}\log\vert{S}\vert +\log\Gamma_d\left(\frac{n}{2}\right)\nonumber\\
&=(\theta^2+\frac{d+1}{2})d\log2
+(\theta^2+\frac{d+1}{2})\log\bigg\vert-\frac{1}{2}(\theta^1)^{-1}\bigg\vert
+\log\Gamma_d\left(\theta^2+\frac{d+1}{2}\right).
\end{align}
The resulting $D^\Holder_\alpha(p(x):q(x))$ and
$D^\Holder_{\alpha,\gamma}(p(x):q(x))$ are straightforward from the
above expression of $F(\theta)$ and Lemma~\ref{lemma:holderclosedform}.
We will omit these tedious expressions for brevity.

\subsection{Approximating H\"older projective divergences for statistical mixtures}\label{sec:hmm}

Given two finite mixture models $m(x)=\sum_{i=1}^k w_i p_i(x)$ and $m'(x)=\sum_{j=1}^{k'}
w_j' p_j'(x)$, we  derive analytic bounds of their H\"older divergences.
When  approximation is  only needed, one may compute H\"older divergences based
on Monte-Carlo stochastic sampling.

Let us assume that all mixture components are in an exponential family~\cite{EF-2009} so that
$p_i(x)=p(x;\theta_i)=\exp(\inner{\theta_i}{t(x)} - F(\theta_i))$
and 
$p_j'(x)=p(x;\theta_j')=\exp(\inner{\theta_j'}{t(x)} - F(\theta_j'))$ are densities (wrt the Lebesgue measure $\mu$).

We rewrite the H\"older divergence into the form:
\begin{equation}\label{eq:holdermixture}
D^\Holder_{\alpha}(m(x):m'(x))
=
-\log\int_\calX m(x)m'(x) \dx
+\frac{1}{\alpha}\log\int_\calX m(x)^\alpha \dx
+\frac{1}{\beta}\log\int_\calX m'(x)^\beta \dx.
\end{equation}

To compute the first term, we observe that a product of mixtures is also a mixture:
\begin{align}
\int_\calX m(x) m'(x) \dx
&=\sum_{i=1}^k \sum_{j=1}^{k'} k_i k'_j
\int_\calX p_i(x) p_j'(x) \dx \nonumber\\
&=\sum_{i=1}^k \sum_{j=1}^{k'} k_i k'_j
\int_\calX \exp\left( \inner{\theta_i+\theta'_j}{t(x)} - F(\theta_i) - F(\theta_j') \right) \dx \nonumber\\
&=\sum_{i=1}^k \sum_{j=1}^{k'} k_i k'_j
\exp\left(F(\theta_i)+F(\theta_j')-F(\theta_i+\theta_j')\right),
\end{align}
which can be computed in $O(kk')$ time.

The second and third  terms in Eq. \ref{eq:holdermixture} are not straightforward to calculate and shall be bounded.
Based on computational geometry,
we adopt the log-sum-exp bounding technique of~\cite{klmm} and divide the
support $\calX$ into $L$ pieces of elementary intervals $\calX=\biguplus{}_{l=1}^{L}I_l$.
In each interval $I_l$, the indices:
\begin{equation}
\delta_l   = \argmax_{i} w_ip_i(x)
\text{  and  }
\epsilon_l = \argmin_{i} w_ip_i(x),
\end{equation}
representing the unique dominating component and the dominated component. Then we bound as follows:
\begin{align}\label{eq:lse}
\max\left\{
\int_{I_l} k^\alpha w_{\epsilon_l}^\alpha p_{\epsilon_l}(x)^\alpha \dx,
\int_{I_l} w_{\delta_l}^\alpha p_{\delta_l}(x)^\alpha \dx \right\}
\le 
\int_{I_l} m(x)^\alpha \dx
\le
\int_{I_l} k^\alpha w_{\delta_l}^\alpha p_{\delta_l}(x)^\alpha \dx.
\end{align}

All terms on the lhs and rhs of Eq. \ref{eq:lse} can be computed exactly by noticing that:

\begin{equation}
\int_{I} p_i(x)^\alpha dx = \int_{I} \exp( \inner{\alpha \theta_i}{t(x)} - \alpha F(\theta_i) )
= \exp( F(\alpha\theta_i) - \alpha F(\theta_i) ) \int_I p(x; \alpha\theta_i) \dx.
\end{equation}

When $\alpha\theta\in\Theta$ where $\Theta$ denotes the natural parameter space, the integral $\int_I p(x; \alpha\theta_i) \dx$ converges, see~\cite{klmm} for further details.

Then the bounds of $\int_\calX m(x)^\alpha \dx$ can be obtained by summing 
the bounds in Eq. \ref{eq:lse} over all elementary intervals.
Thus $D^\Holder_{\alpha,\beta}(m(x):m'(x))$ can be both lower and upper bounded.

\section{H\"older centroids and center-based clustering}\label{sec:centroid}

We study the application of HPD and HD for clustering distributions~\cite{gammadivclustering-2014},
specially clustering Gaussian distributions~\cite{GClust-2000,ClustNormal-2006,ClustNormal-2009},
which have been used in sound processing~\cite{GClust-2000}, sensor network~\cite{ClustNormal-2006}, statistical debugging~\cite{ClustNormal-2006}, 
and quadratic invariants of switched systems~\cite{quadinvariant-2015}, etc.
Other potential applications of HD may include nonnegative matrix factorization~\cite{NMFbeta-2014},
and clustering von Mises-Fisher~\cite{vMF-2005,vMF-2014} (log-normalizer expressed using Bessel functions).

 
\subsection{H\"older centroids}

We study center-based clustering of a finite set of distributions:
Given a list of  distributions belonging to the same conic exponential family with natural parameters
$\{\theta_1,\cdots,\theta_n\}$ and their associated positive weights $\{w_1,\cdots,w_n\}$
with $\sum_{i=1}^n w_i=1$, consider their centroids based on HPD and HD as follows:
\begin{align}
C_{\alpha}(\{\theta_i,w_i\})        &= \argmin_{C} \sum_{i=1}^n w_i D^\Holder_\alpha(\theta_i:C),\\
C_{\alpha,\gamma}(\{\theta_i,w_i\}) &= \argmin_{C} \sum_{i=1}^n w_i D^\Holder_{\alpha,\gamma}(\theta_i:C).
\end{align}

By abuse of notation, $C$ denotes both the HPD centroid and HD centroid.
When the context is clear, the parameters in parentheses can be omitted so that these centroids
are simply denoted as $C_{\alpha}$ and $C_{\alpha,\gamma}$.
Both of them are defined as the right-sided centroids.
The corresponding left-handed centroids are obtained according
to reference duality, i.e.,
\begin{align}
C_{\baralpha}        &= \argmin_{C} \sum_{i=1}^n w_i D^\Holder_\alpha(C:\theta_i),\\
C_{\baralpha,\gamma} &= \argmin_{C} \sum_{i=1}^n w_i D^\Holder_{\alpha,\gamma}(C:\theta_i).
\end{align}

By Lemma~\ref{lemma:holderclosedform}, these centroids can be obtained for distributions belonging to the same exponential family as follows:
\begin{align}
C_{\alpha}
&= \argmin_{C}\left[
\frac{1}{\beta} F(\beta C) - \sum_{i=1}^n w_i F(\theta_i+C) \right],\\
C_{\alpha,\gamma}
&= \argmin_{C}\left[
\frac{1}{\beta}F(\gamma C) - \sum_{i=1}^n w_i F\left(\frac{\gamma}{\alpha}\theta_i+\frac{\gamma}{\beta}C\right) \right].
\end{align}

Let $\gamma=\alpha$, we get:
\begin{equation}
C_{\alpha,\alpha}(\{\theta_i,w_i\}) = \argmin_{C}\left[
\frac{1}{\beta}F(\alpha C) - \sum_{i=1}^n w_i F\left(\theta_i+ \frac{\alpha}{\beta} C\right)
\right]
= \frac{\beta}{\alpha} C_\alpha = \frac{1}{\alpha-1} C_{\alpha}(\{\theta_i,w_i\}),
\end{equation}
meaning that the HPD centroid is just a special case of HD centroid up to
a {\em scaling transformation} in the natural parameters space.
Let $\gamma=\beta$, we get:
\begin{equation}
C_{\alpha,\beta}(\{\theta_i,w_i\}) = \argmin_{C}\left[
\frac{1}{\beta}F(\beta C) - \sum_{i=1}^n w_i F\left(\frac{\beta}{\alpha}\theta_i + C\right)
\right]
= C_\alpha\left(\left\{\frac{\beta}{\alpha}\theta_i,w_i\right\}\right)
= C_\alpha\left(\left\{\frac{1}{\alpha-1}\theta_i,w_i\right\}\right).
\end{equation}

Let us consider the general HD centroid $C_{\alpha,\gamma}$.
Since $F$ is convex, the minimization energy is the sum of a {\em convex function}
$\frac{1}{\beta} F(\gamma C)$ with a {\em concave function} $ -\sum_{i=1}^n w_i F\left(\frac{\gamma}{\alpha}\theta_i+\frac{\gamma}{\beta}C\right)$.
We can therefore use the concave-convex procedure (CCCP)~\cite{BR-2011} that optimizes Difference of Convex Programs (DCPs):
We start with $C_{\alpha,\gamma}^0=\sum_{i=1}^n w_i \theta_i$ (the barycenter, belonging to $\Theta$) and then update:
\begin{equation}
C_{\alpha,\gamma}^{t+1} = \frac{1}{\gamma} (\nabla F)^{-1}
\left(
\sum_{i=1}^n w_i \nabla F\left(\frac{\gamma}{\alpha}\theta_i+\frac{\gamma}{\beta}C_{\alpha,\gamma}^t\right) \right).
\end{equation}
for $t=0,1,\cdots$ until convergence. This can be done by noting that $\eta=\nabla F(\theta)$ 
are the dual parameters that are also known as the expectation parameters (or moment parameters).
Therefore $\nabla{F}$ and $(\nabla{F})^{-1}$ can be computed through Legendre
transformations between the natural parameter space and the dual parameter space.

This iterative optimization is guaranteed to converge to a {\em local} minimum,
with a main advantage of bypassing the learning rate parameter of gradient descent algorithms.
Since $F$ is strictly convex, $\nabla F$ is monotonous, and the rhs expression can
be interpreted as a multi-dimensional quasi-arithmetic mean.
In fact, it is a barycenter on non-normalized weights scaled by $\beta=\baralpha$.

For exponential families, the symmetric HPD centroid is:
\begin{equation}
O_{\alpha}
= \argmin_O \sum_{i=1}^n w_i S^\Holder_{\alpha}(\theta_i:O)
= \argmin_O 
\left[ \frac{1}{2\alpha}F(\alpha{O}) +\frac{1}{2\beta}F(\beta{O})
-\sum_{i=1}^n w_i F(\theta_i+O) \right].
\end{equation}

In this case, CCCP update rule is not in closed form because we
cannot easily inverse the sum of gradients (but when $\alpha=\beta$, the two terms
collapse so CS centroid can be calculated using CCCP).
Nevertheless, we can implement the reciprocal operation numerically.
Interestingly, the symmetric HD centroid can be solved by CCCP!
It amounts to solve:
\begin{equation}
O_{\alpha,\gamma}
= \argmin_O \sum_{i=1}^n w_i S^\Holder_{\alpha,\gamma}(\theta_i:O)
= \argmin_O
\left[
F(\gamma O) - \sum_{i=1}^n w_i \left(
F\left(\frac{\gamma}{\alpha}\theta_i+\frac{\gamma}{\beta}O\right) + F\left(\frac{\gamma}{\beta}\theta_i+\frac{\gamma}{\alpha}O\right)
\right)
\right].
\end{equation}

One can apply CCCP to iteratively update the centroids based on:
 \begin{equation}
O_{\alpha,\gamma}^{t+1} = \frac{1}{\gamma} (\nabla F)^{-1}
\left[ \sum_{i=1}^n w_i \left(
\frac{1}{\beta}
\nabla F
\left(\frac{\gamma}{\alpha}\theta_i+\frac{\gamma}{\beta}O_{\alpha,\gamma}^t\right)
+ \frac{1}{\alpha}
\nabla F
\left(\frac{\gamma}{\beta}\theta_i+\frac{\gamma}{\alpha}O_{\alpha,\gamma}^t\right)
\right)\right].
\end{equation}

Notice the similarity with the updating procedure of $C_{\alpha,\gamma}^t$.

Once the centroid, say $O_{\alpha,\gamma}$, has been computed, we calculate the associated {\em H\"older information}:
\begin{equation}
\sum_{i=1}^n w_i S^\Holder_{\alpha,\gamma}(\theta_i:O_{\alpha,\gamma}).
\end{equation}

The H\"older information generalizes the notion of variance and Bregman information~\cite{cb-2005} to the case of H\"older distances.

\subsection{Clustering based on symmetric H\"older divergences}
Given a set of fixed densities $\{p_1,\cdots,p_n\}$, we can perform
{\em variational $k$-means}~\cite{tJ-2015} with
respect to the H\"older divergence to minimize the cost function:
\begin{equation}
E(O_1,\cdots,O_L,l_1,\cdots,l_n)
= \sum_{i=1}^n S^\Holder_{\alpha,\gamma}(p_i:O_{l_i}),
\end{equation}
where $O_1,\cdots,O_L$ are the cluster centers, and
$l_i\in\{1,\cdots,L\}$ is the cluster label of $p_i$.
The algorithm is given by Algorithm~\ref{algo:kmeans}.
Notice that one does not need to wait for the 
CCCP iterations to converge. It only has to improve
the cost function $E$ before updating the assignment.
We have implemented the algorithm based on the symmetric
HD. One can easily modify it based on HPD and other variants.
 
\begin{algorithm}
\DontPrintSemicolon
\KwIn{A list of probability distributions $p_1,\cdots,p_n$; number of clusters $L$; $\alpha>1$; $\gamma>0$}
\KwOut{A clustering scheme $p_i\to\{1,\cdots,L\}$, $\forall{i}\in\{1,\cdots,n\}$}
Randomly assign $p_i$ to $l_i$\;
\While{not converged}{
   \For{$l=1,\dots,L$}{
	  \tcc{Variational $k$-means: Carry CCCP iterations until the current center improves the former cluster H\"older information} 
       Compute the centroid $O_l= \arg\min_O\sum_{i:l_i=l} S^\Holder_{\alpha,\gamma}(p_i:O)$\;
   }
   \For{$i=1,\dots,n$}{
   	   Assign $l_i=\arg\min_{l}S^\Holder_{\alpha,\gamma}(p_i:O_l)$ \;
   }
}
\Return{$\{l_i\}_{i=1}^n$}\;
\caption{H\"older variational $k$-means}\label{algo:kmeans}
\end{algorithm}

\begin{figure}[ht!]
\centering
\begin{subfigure}[b]{.7\textwidth}
\includegraphics[width=\textwidth]{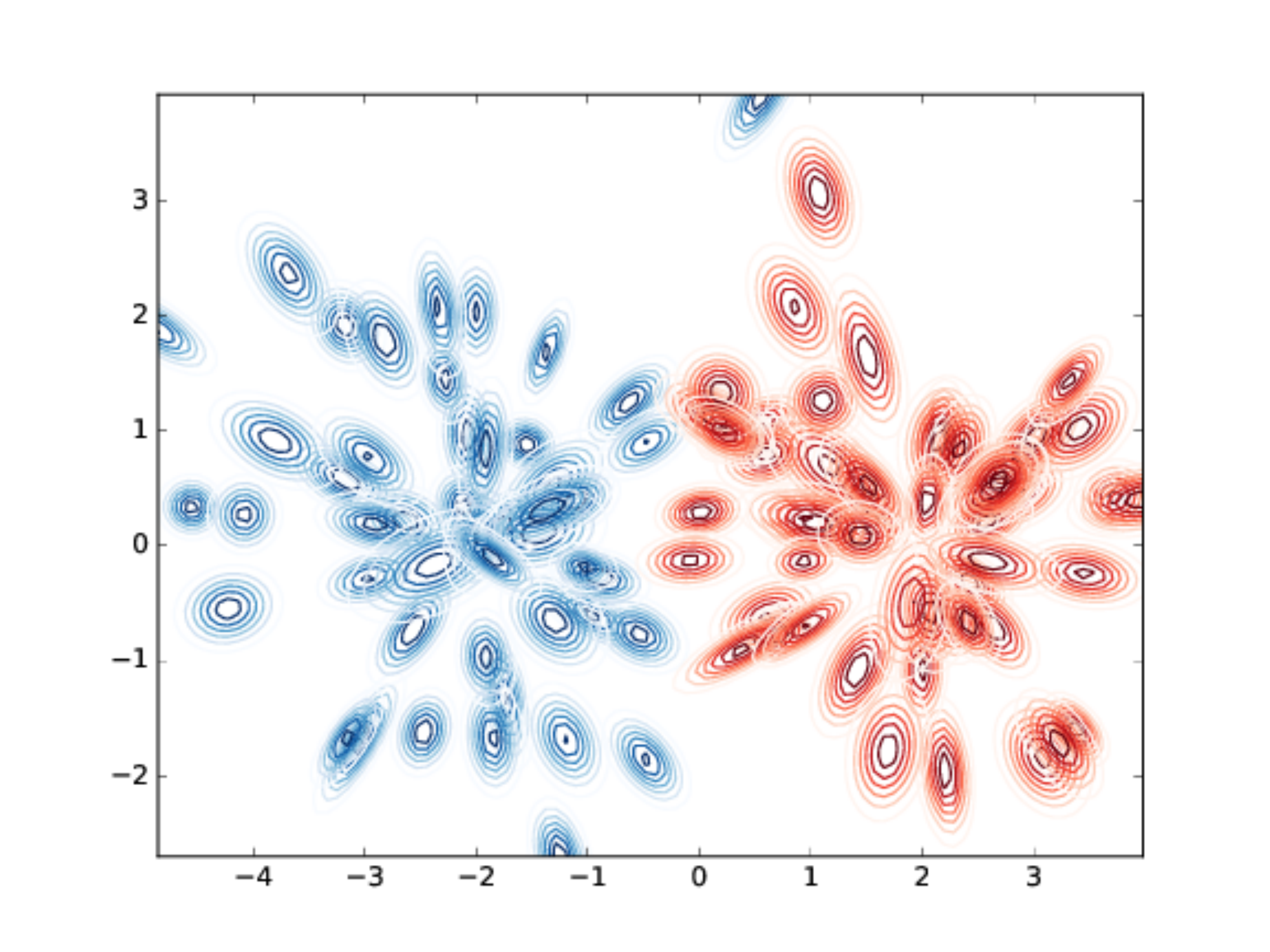}
\caption{$\alpha=\gamma=1.1$ (H\"older clustering)}
\end{subfigure}
\begin{subfigure}[b]{.7\textwidth}
\includegraphics[width=\textwidth]{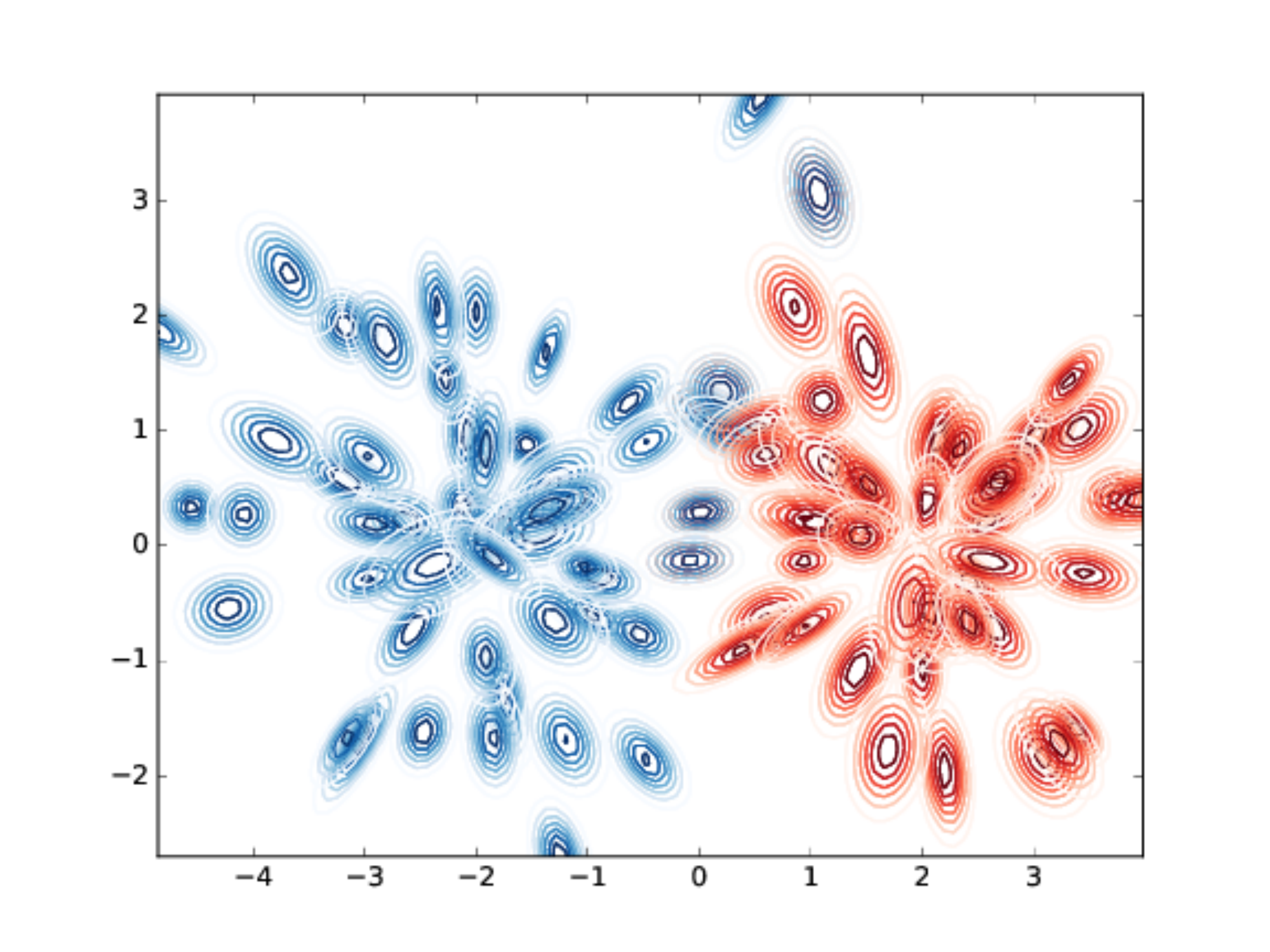}
\caption{$\alpha=\gamma=2$ (Cauchy-Schwarz clustering)}
\end{subfigure}
\caption{Variational $k$-means clustering results on
a toy dataset consisting of a set of 2D Gaussians organized into two clusters.}\label{fig:cluster}
\end{figure}

\begin{table}[ht]
\centering
\caption{Clustering accuracy of the 2D Gaussian dataset (based on 500 independent runs).}\label{tbl:acc}
\begin{tabular}{l|cccc}
\hline
Parameters & $\alpha=\gamma=1.1$ & $\alpha=\gamma=1.5$ & $\alpha=\gamma=2$ (CS) & $\alpha=\gamma=10$\\
\hline
$n=50$   & $\mathbf{95.6\%\pm7.6\%}$ & $92.4\pm9.8\%$ & $92.2\%\pm9.9\%$  & $92.2\%\pm9.7\%$ \\
$n=100$  & $\mathbf{97.3\%\pm5.3\%}$ & $94.9\pm8.8\%$ & $94.0\%\pm10.1\%$ & $94.2\%\pm9.3\%$ \\
\hline
\end{tabular}
\end{table}

We made a toy dataset generator which can randomly generate $n$ 2D Gaussians
which have a underlying structure of two clusters.
In the first cluster, the mean of each Gaussian $G(\mu, \Sigma)$ has the prior distribution
$\mu\sim G((-2,0), I)$; the covariance matrix is obtained by first
generating $\sigma_1\sim\Gamma(7,0.01)$, $\sigma_2\sim\Gamma(7,0.003)$,
where $\Gamma$ means a gamma distribution with prescribed shape and scale,
then rotating the covariance matrix $\diag(\sigma_1,\sigma_2)$ so that the resulting
Gaussian has a ``radial direction'' with respect to the center $(-2,0)$.
The second cluster is similar
to the first cluster with the only difference being that 
$\mu\sim G((2,0), I)$ is centered around $(-2,0)$.
See Fig.~\ref{fig:cluster} for an intuitive presentation of the toy dataset.

To reduce the number of parameters that has to be tuned,
we only investigate the case $\alpha=\gamma$.
If we choose $\alpha=\gamma=2$, then 
$S^\Holder_{\alpha,\gamma}$ becomes the CS divergence
and Algorithm~\ref{algo:kmeans} reduces to traditional CS clustering.
From Fig.~\ref{fig:cluster} we can observe that the clustering
result does vary with the settings of $\alpha$ and $\gamma$.
We performed clustering experiments on two different settings of the sample size.
Table~\ref{tbl:acc} shows the clustering accuracy measured by
the percentage of ``correctly clustered'' Gaussians, i.e.,
the output label by clustering algorithms that coincides
with the true label corresponding to the data generating process.
We see that the symmetric H\"older divergence can give better
clustering results as compared to CS clustering. This hints that
one could consider the general H\"older divergence to replace CS in
similar clustering applications~\cite{CSShapeMatching-2014,CSGGTexture-2016}. 
Although one faces the problem of
tuning the parameter $\alpha$ and $\gamma$, H\"older divergences
can potentially give {\em better} results. This is expected because
CS is just one particular case of the class of H\"older divergences.

\section{Conclusion and perspectives}\label{sec:concl}

We introduced the notion of pseudo-divergences that generalizes the concept of divergences in information geometry~\cite{IG-2016} that are smooth non-metric statistical distances that are not required to obey the law of the indiscernibles.
Pseudo-divergences can be built from inequalities by considering the inequality difference gap or its log-ratio gap. 
We then defined two classes of statistical measures based on H\"older's ordinary and reverse inequalities: 
The singly parametric family of H\"older pseudo-divergences and the doubly parametric family of H\"older divergences.
By construction, the H\"older divergences are proper divergences between probability densities.
Both statistical H\"older distance families  are projective divergences that do not require distributions to be normalized, and admit closed-form expressions when considering exponential families with conic or affine natural parameter space (like multinomials or multivariate normals). Those two families of distances can be symmetrized and intersect at the unique Cauchy-Schwarz divergence.
Since the Cauchy-Schwarz divergence is often used in distribution clustering applications~\cite{CSShapeMatching-2014}, we carried out preliminary experiments demonstrating experimentally that thesymmetrized H\"older divergences improved over the Cauchy-Schwarz divergence  for a toy dataset of Gaussians.
We briefly touched upon the use of these novel divergences in statistical estimation theory.
These projective  H\"older (pseudo-)divergences are different from the recently introduced compositive scored-induced H\"older  divergences~\cite{affineInvariantDivergence-2014,scaleinvariantDiv-2014} that are not projective divergences and do not admit closed-form expressions for exponential families in general.

We elicited the special role of escort distributions~\cite{IG-2016} for H\"older divergences  in our framework:
Escort distributions transform distributions to allow one:
\begin{itemize}
\item To reveal that H\"older pseudo-divergences on escort distributions amount to skew Bhattacharyya divergences~\cite{BR-2011},
\item To transform the improper  H\"older pseudo-divergences into proper H\"older divergences, and vice versa.
\end{itemize}

Let us conclude with a perspective note on pseudo-divergences, statistical estimators, and manifold learning.
Proper divergences have been widely used in statistical estimators to build families of estimators~\cite{Pardo-2005,SD-2011}.
Similarly, given a prescribed density $p_0(x)$, a pseudo-divergence yields a corresponding estimator by minimizing $D(p_0(x):q(x))$
with respect to $q(x)$. However in this case the resulting $q(x)$ is potentially biased and is not
guaranteed to recover the optimal input $p_0(x)$. Furthermore, the minimizer of
$D(p_0(x):q(x))$ may not be unique, i.e., there could be more than one probability density $q(x)$ yielding $D(p_0(x):q(x))=0$.

\emph{How can pseudo-divergences be useful?} 
We have the following two simple arguments:

\begin{itemize}
\item In an estimation scenario, we can usually pre-compute $p_1(x)\neq{p_0}(x)$ according to
$D(p_1(x):p_0(x))=0$. Then the estimation $q(x)=\arg\min_q D(p_1(x):q(x))$ will automatically target at $p_0(x)$.
We call this technique \emph{pre-aim}.

For example, given positive measure $p(x)$, we first find $p_0(x)$ to satisfy
$D^\Holder_\alpha(p_0(x):p(x))=0$. We have
$p_0(x)=p(x)^{\frac{1}{\alpha-1}}$  that satisfies this condition.
Then a proper divergence between $p(x)$ and $q(x)$ 
can be obtained by aiming $q(x)$ towards $p_0(x)$.
For conjugate exponents $\alpha$ and $\beta$,
\begin{align}
D^\Holder_{\alpha}(p_0(x):q(x))
&= -\log\frac{\int_\calX p_0(x)q(x) \dx}
{\left(\int_\calX p_0(x)^{\alpha} \dx\right)^{1/\alpha}
\left(\int_\calX q(x)^{\beta} \dx\right)^{1/\beta}},\\
&= -\log\frac{\int_\calX p(x)^{\frac{1}{\alpha-1}} q(x) \dx}
{\left(\int_\calX p(x)^{\frac{\alpha}{\alpha-1}} \dx\right)^{1/\alpha}
\left(\int_\calX q(x)^{\beta} \dx\right)^{1/\beta}},\\
&= -\log\frac{\int_\calX p(x)^{\frac{\beta}{\alpha}} q(x) \dx}
{\left(\int_\calX p(x)^{\beta} \dx\right)^{1/\alpha}
\left(\int_\calX q(x)^{\beta} \dx\right)^{1/\beta}}
= D^\Holder_{\alpha,\beta}(p(x):q(x)).
\end{align}
This means that the pre-aim technique of HPD is equivalent to HD
$D^\Holder_{\alpha,\gamma}$ when we set $\gamma=\beta$.

As an alternative implementation of pre-aim, since $D^\Holder_\alpha(p(x):p^{\alpha-1}(x))=0$, 
a proper divergence between $p(x)$ and $q(x)$ can be constructed by measuring:
\begin{equation}
D^\Holder_{\alpha}(q(x):p^{\alpha-1}(x))
= -\log\frac{\int_\calX p(x)^{\frac{\alpha}{\beta}} q(x) \dx}
{\left(\int_\calX q(x)^{\alpha} \dx\right)^{1/\alpha}
\left(\int_\calX p(x)^{\alpha} \dx\right)^{1/\beta}}
= D^\Holder_{\beta,\alpha}(p(x):q(x)),
\end{equation}
turning out again to belong to the class of HD.

In practice, HD as a two-parameter family is less used than HPD with pre-aim
because of the difficulty in choosing the parameter $\gamma$, and because that
HD has a slightly more complicated expression. The family of HD 
connecting CS divergence with skew Bhattacharyya divergence~\cite{BR-2011}
is nevertheless of theoretical importance.

\item In manifold learning~\cite{hinton03,ee,igml}, it is an essential
topic to align two category distributions $p_0(x)$ and $q(x)$ corresponding
respectively to the input and output~\cite{igml}, both for learning and for performance evaluation.
In this case, the dimensionality of the statistical manifold that encompasses $p_0(x)$ and $q(x)$
is so high that to preserve monotonically $p_0(x)$ in the resulting $q(x)$ is already a difficult
non-linear optimization and could be sufficient for the application, while preserving perfectly
the input $p_0(x)$ is not so meaningful because of the input noise. It is then much easier to define
pseudo-divergences using inequalities which are not necessarily to be proper with potentially more choices. 
On the other hand, projective divergences including H\"older divergences introduced in this work are
more meaningful in manifold learning than KL divergence (which is widely used) because they give scale
invariance of the probability densities, meaning that one can define positive similarities
then directly align these similarities, which is guaranteed to be equivalent to
align the corresponding distributions. This could potentially give unified perspectives
in between the two approaches of similarity based manifold learning~\cite{ee} and the probabilistic approach~\cite{hinton03}.

\end{itemize}

We expect that these two novel parametric H\"older classes of statistical divergences and pseudo-divergences open up new insights and applications in statistics and information sciences. Furthermore, the framework to build divergences or pseudo-divergences from proper or improper biparametric inequalities offers novel classes of divergences to study.

Reproducible source code is available online at:\\
 \url{https://www.lix.polytechnique.fr/\textasciitilde nielsen/HPD/}


\begin{thebibliography}{10}

\bibitem{quadinvariant-2015}
Xavier Allamigeon, St{\'e}phane Gaubert, Eric Goubault, Sylvie Putot, and
  Nikolas Stott.
\newblock A scalable algebraic method to infer quadratic invariants of switched
  systems.
\newblock In {\em Proceedings of the 12th International Conference on Embedded
  Software}, pages 75--84. IEEE Press, 2015.

\bibitem{IG-2016}
Shun-ichi Amari.
\newblock {\em Information Geometry and Its Applications}.
\newblock Applied Mathematical Sciences. Springer Japan, 2016.

\bibitem{vMF-2005}
Arindam Banerjee, Inderjit~S Dhillon, Joydeep Ghosh, and Suvrit Sra.
\newblock Clustering on the unit hypersphere using von {M}ises-{F}isher
  distributions.
\newblock {\em Journal of Machine Learning Research}, 6(Sep):1345--1382, 2005.

\bibitem{cb-2005}
Arindam Banerjee, Srujana Merugu, Inderjit~S Dhillon, and Joydeep Ghosh.
\newblock Clustering with {B}regman divergences.
\newblock {\em Journal of machine learning research}, 6(Oct):1705--1749, 2005.

\bibitem{SD-2011}
Ayanendranath Basu, Hiroyuki Shioya, and Chanseok Park.
\newblock {\em Statistical inference: the minimum distance approach}.
\newblock CRC Press, 2011.

\bibitem{Bhatt-1943}
A.~Bhattacharyya.
\newblock On a measure of divergence between two statistical populations
  defined by their probability distributions.
\newblock {\em Bulletin of the Calcutta Mathematical Society}, 35:99--109,
  1943.

\bibitem{cvx}
Stephen Boyd and Lieven Vandenberghe.
\newblock {\em Convex Optimization}.
\newblock Cambridge University Press, 2004.

\bibitem{CSRepDataSampling-2011}
Marcin Budka, Bogdan Gabrys, and Katarzyna Musial.
\newblock On accuracy of {PDF} divergence estimators and their applicability to
  representative data sampling.
\newblock {\em Entropy}, 13(7):1229--1266, 2011.

\bibitem{burbea-1982}
J~Burbea and C~Rao.
\newblock On the convexity of some divergence measures based on entropy
  functions.
\newblock {\em IEEE Transactions on Information Theory}, 28(3):489--495, 1982.

\bibitem{ee}
M.~{\'A}. Carreira-Perpi{\~n}{\'a}n.
\newblock The elastic embedding algorithm for dimensionality reduction.
\newblock In {\em International Conference on Machine Learning}, pages
  167--174, 2010.

\bibitem{cover-2012}
Thomas~M Cover and Joy~A Thomas.
\newblock {\em Elements of information theory}.
\newblock John Wiley \& Sons, 2012.

\bibitem{ClustNormal-2006}
Jason~V Davis and Inderjit~S Dhillon.
\newblock Differential entropic clustering of multivariate {G}aussians.
\newblock In {\em Advances in Neural Information Processing Systems}, pages
  337--344, 2006.

\bibitem{GammaDiv-2008}
Hironori Fujisawa and Shinto Eguchi.
\newblock Robust parameter estimation with a small bias against heavy
  contamination.
\newblock {\em Journal of Multivariate Analysis}, 99(9):2053--2081, 2008.

\bibitem{spsr-2007}
Tilmann Gneiting and Adrian~E Raftery.
\newblock Strictly proper scoring rules, prediction, and estimation.
\newblock {\em Journal of the American Statistical Association},
  102(477):359--378, 2007.

\bibitem{vMF-2014}
Siddarth Gopal and Yiming Yang.
\newblock {V}on {M}ises-{F}isher clustering models.
\newblock {\em Journal of Machine Learning Research}, 32:154--162, 2014.

\bibitem{CSShapeMatching-2014}
Erion Hasanbelliu, Luis~Sanchez Giraldo, and Jose~C Principe.
\newblock Information theoretic shape matching.
\newblock {\em IEEE transactions on pattern analysis and machine intelligence},
  36(12):2436--2451, 2014.

\bibitem{hinton03}
G.~E. Hinton and S.~T. Roweis.
\newblock Stochastic {N}eighbor {E}mbedding.
\newblock In {\em Advances in Neural Information Processing Systems 15 (NIPS
  2002)}, pages 833--840. MIT Press, 2003.

\bibitem{Holder-1889}
Otto~Ludwig H\"older.
\newblock {\"Uber} einen Mittelwertssatz.
\newblock {\em Nachr. Akad. Wiss. Gottingen Math.-Phys. Kl.}, 1889.

\bibitem{scaleinvariantDiv-2014}
Takafumi Kanamori.
\newblock Scale-invariant divergences for density functions.
\newblock {\em Entropy}, 16(5):2611--2628, 2014.

\bibitem{affineInvariantDivergence-2014}
Takafumi Kanamori, Hironori Fujisawa, et~al.
\newblock Affine invariant divergences associated with proper composite scoring
  rules and their applications.
\newblock {\em Bernoulli}, 20(4):2278--2304, 2014.

\bibitem{DivMix-2012}
Frank Nielsen.
\newblock Closed-form information-theoretic divergences for statistical
  mixtures.
\newblock In {\em Pattern Recognition (ICPR), 2012 21st International
  Conference on}, pages 1723--1726. IEEE, 2012.

\bibitem{BR-2011}
Frank Nielsen and Sylvain Boltz.
\newblock The {B}urbea-{R}ao and {B}hattacharyya centroids.
\newblock {\em IEEE Transactions on Information Theory}, 57(8):5455--5466,
  2011.

\bibitem{EF-2009}
Frank Nielsen and Vincent Garcia.
\newblock Statistical exponential families: A digest with flash cards.
\newblock {\em arXiv preprint arXiv:0911.4863}, 2009.

\bibitem{ClustNormal-2009}
Frank Nielsen and Richard Nock.
\newblock Clustering multivariate normal distributions.
\newblock In {\em Emerging Trends in Visual Computing}, pages 164--174.
  Springer, 2009.

\bibitem{SM-2011}
Frank Nielsen and Richard Nock.
\newblock A closed-form expression for the {S}harma-{M}ittal entropy of
  exponential families.
\newblock {\em Journal of Physics A: Mathematical and Theoretical},
  45(3):032003, 2011.

\bibitem{skewJB}
Frank Nielsen and Richard Nock.
\newblock Skew {J}ensen-{B}regman {V}oronoi diagrams.
\newblock In {\em Transactions on Computational Science XIV}, pages 102--128.
  Springer, 2011.

\bibitem{fdivChi-2014}
Frank Nielsen and Richard Nock.
\newblock On the {C}hi square and higher-order {C}hi distances for
  approximating $f$-divergences.
\newblock {\em IEEE Signal Processing Letters}, 1(21):10--13, 2014.

\bibitem{tJ-2015}
Frank Nielsen and Richard Nock.
\newblock Total {J}ensen divergences: definition, properties and clustering.
\newblock In {\em 2015 IEEE International Conference on Acoustics, Speech and
  Signal Processing (ICASSP)}, pages 2016--2020. IEEE, 2015.

\bibitem{PM-ProjDiv-2016}
Frank Nielsen and Richard Nock.
\newblock Patch matching with polynomial exponential families and projective
  divergences.
\newblock In {\em Similarity Search and Applications - 9th International
  Conference, {SISAP} 2016, Tokyo, Japan, October 24-26, 2016. Proceedings},
  pages 109--116, 2016.

\bibitem{klmm}
Frank Nielsen and Ke~Sun.
\newblock Guaranteed bounds on information-theoretic measures of univariate
  mixtures using piecewise log-sum-exp inequalities.
\newblock {\em Entropy}, 18(12):442, 2016.

\bibitem{gammadivclustering-2014}
Akifumi Notsu, Osamu Komori, and Shinto Eguchi.
\newblock Spontaneous clustering via minimum gamma-divergence.
\newblock {\em Neural computation}, 26(2):421--448, 2014.

\bibitem{Pardo-2005}
Leandro Pardo.
\newblock {\em {Statistical Inference Based on Divergence Measures}}.
\newblock CRC Press, Abingdon, 2005.

\bibitem{CSGGTexture-2016}
Hassan Rami, Leila Belmerhnia, Ahmed Drissi El~Maliani, and Mohammed
  El~Hassouni.
\newblock Texture retrieval using mixtures of generalized {G}aussian
  distribution and {C}auchy-{S}chwarz divergence in wavelet domain.
\newblock {\em Image Commun.}, 42(C):45--58, March 2016.

\bibitem{rao45}
C.~R. Rao.
\newblock Information and accuracy attainable in the estimation of statistical
  parameters.
\newblock {\em Bulletin of Calcutta Mathematical Society}, 37(3):81--91, 1945.

\bibitem{GClust-2000}
Luca Rigazio, Brice Tsakam, and J-C Junqua.
\newblock An optimal {B}hattacharyya centroid algorithm for {G}aussian
  clustering with applications in automatic speech recognition.
\newblock In {\em Acoustics, Speech, and Signal Processing, 2000. ICASSP'00.
  Proceedings. 2000 IEEE International Conference on}, volume~3, pages
  1599--1602. IEEE, 2000.

\bibitem{sqrRep-2007}
Anuj Srivastava, Ian Jermyn, and Shantanu Joshi.
\newblock Riemannian analysis of probability density functions with
  applications in vision.
\newblock In {\em 2007 IEEE Conference on Computer Vision and Pattern
  Recognition}, pages 1--8. IEEE, 2007.

\bibitem{NMFbeta-2014}
Dennis~L Sun and C{\'e}dric F{\'e}votte.
\newblock Alternating direction method of multipliers for non-negative matrix
  factorization with the beta-divergence.
\newblock In {\em 2014 IEEE International Conference on Acoustics, Speech and
  Signal Processing (ICASSP)}, pages 6201--6205. IEEE, 2014.

\bibitem{igml}
Ke~Sun and St\'ephane Marchand-Maillet.
\newblock An information geometry of statistical manifold learning.
\newblock In {\em International Conference on Machine Learning; JMLR W\&CP 32
  (2)}, pages 1--9, 2014.

\bibitem{referenceduality-2015}
Jun Zhang.
\newblock Reference duality and representation duality in information geometry.
\newblock In {\em American Institute of Physics Conference Series}, volume
  1641, pages 130--146, 2015.

\end{thebibliography}

\appendix

\section{Proof of H\"older ordinary and reverse inequalities}\label{sec:hdproof}
We extend the proof (\cite{cvx}, pp.78) to prove both the (ordinary or forward) H\"older
inequality and the reverse H\"older inequality.

\begin{proof}
First, let us observe that $-\log(x)$ is strictly convex on $(0,+\infty)$ since $(-\log(x))''=\frac{1}{x^2}$.
It follows that for $0< a <1$ that:

\begin{equation}
-\log( a x_1 + (1-a) x_2 ) \le -a\log(x_1) - (1-a)\log(x_2),
\end{equation}
where the equality holds iff $x_1=x_2$.

Conversely, when $a<0$ or $a>1$, we have:
\begin{equation}
-\log( a x_1 + (1-a) x_2 ) \ge -a\log(x_1) - (1-a)\log(x_2),
\end{equation}
where the equality holds iff $x_1=x_2$.

Equivalently, we can write these two inequalities as follows:
\begin{equation}\label{eq:cvx}
\left\{\begin{array}{ll}
x_1^a x_2^{1-a} & \le ax_1 + (1-a)x_2\quad(\text{if }0<a<1);\\
x_1^a x_2^{1-a} & \ge ax_1 + (1-a)x_2\quad(\text{if }a<0\text{ or }a>1),
\end{array}\right.
\end{equation}
both of them are tight iff $x_1=x_2$.

Let $P$ and $Q$ be positive measures with Radon-Nikodym densities $p(x)>0$ and $q(x)>0$ be positive densities with respect to the reference Lebesgue measure $\mu$.
The densities are strictly greater than $0$ on the support $\calX$.
Plugging:
\begin{equation}
a=\frac{1}{\alpha},\quad
1-a=\frac{1}{\beta},\quad
x_1 = \frac{p(x)^\alpha}{\int_{\calX} p(x)^\alpha \dx},\quad
x_2 = \frac{q(x)^\beta}{\int_{\calX} q(x)^\beta \dx},\quad
\end{equation}
into Eq. \ref{eq:cvx}, we get:
\begin{equation}
\left\{
\begin{array}{ll}
\frac{p(x)}{ \left(\int_{\calX} p(x)^\alpha \dx \right)^{1/\alpha} }
\frac{q(x)}{ \left(\int_{\calX} q(x)^\beta  \dx \right)^{1/\beta} }
\le 
\frac{1}{\alpha} \frac{p(x)^\alpha}{\int p(x)^\alpha \dx} +
\frac{1}{\beta} \frac{q(x)^\beta}{\int q(x)^\beta \dx} &
\text{if $\alpha>0$ and $\beta>0$,}\\
\frac{p(x)}{ \left(\int_{\calX} p(x)^\alpha \dx \right)^{1/\alpha} }
\frac{q(x)}{ \left(\int_{\calX} q(x)^\beta  \dx \right)^{1/\beta} }
\ge 
\frac{1}{\alpha} \frac{p(x)^\alpha}{\int p(x)^\alpha \dx} +
\frac{1}{\beta} \frac{q(x)^\beta}{\int q(x)^\beta \dx} &
\text{if $\alpha<0$ or $\beta<0$.}
\end{array}
\right.
\end{equation}
Assume that $p(x)$ in $L_\alpha(\calX,\mu)$ and $q(x)$ in $L_\beta(\calX,\mu)$ so that both $\int_{\calX} p(x)^\alpha \dx$ 
and $\int_{\calX} q(x)^\beta  \dx$ converge.
Integrate both sides on $\calX$ to get:
\begin{equation}
\left\{
\begin{array}{ll}
\frac{\int_{\calX} p(x) q(x) \dx}
{ \left(\int_{\calX} p(x)^\alpha \dx \right)^{1/\alpha}
  \left(\int_{\calX} q(x)^\beta  \dx \right)^{1/\beta} } \le 1 &
\text{if $\alpha>0$ and $\beta>0$,}\\
\frac{\int_{\calX} p(x) q(x) \dx}
{ \left(\int_{\calX} p(x)^\alpha \dx \right)^{1/\alpha}
  \left(\int_{\calX} q(x)^\beta  \dx \right)^{1/\beta} } \ge 1 &
\text{if $\alpha<0$ or $\beta<0$.}
\end{array}\right.
\end{equation}

The necessary and sufficient condition for equality is that: 
\begin{equation}
\frac{p(x)^\alpha}{\int_{\calX} p(x)^\alpha \dx}
=
\frac{q(x)^\beta}{\int_{\calX} q(x)^\beta \dx},
\end{equation}
almost everywhere.
That is, there exists a positive constant $\lambda>0$   such that:
\begin{equation}
p(x)^\alpha = \lambda q(x)^\beta, \quad \lambda>0,\quad \mbox{almost everywhere.}
\end{equation}

\end{proof}

The H\"older conjugate exponents $\alpha$ and $\beta$ satisfies $\frac{1}{\alpha}+\frac{1}{\beta}=1$.
That is $\beta=\frac{\alpha}{\alpha-1}$. Thus when $\alpha<0$ we necessarily have $\beta>0$, and vice-versa.

We can unify these two straight and reverse H\"older inequalities into a single inequality by considering the sign of $\alpha\beta=\frac{\alpha^2}{\alpha-1}$:
We get the general H\"older inequality:

\begin{equation}
(-1)^{\sign(\alpha\beta)} \frac{\int_{\calX} p(x) q(x) \dx}
{ \left(\int_{\calX} p(x)^\alpha \dx \right)^{1/\alpha}
  \left(\int_{\calX} q(x)^\beta  \dx \right)^{1/\beta} } \geq (-1)^{\sign(\alpha\beta)}.
\end{equation}

When $\alpha=\beta=2$, H\"older inequality becomes the Cauchy-Schwarz inequality:\footnote{Historically, Cauchy stated the discrete sum inequality in 1821 while Schwarz reported the integral form of the inequality in 1888.} 
\begin{equation}
\int_\calX p(x)q(x)\dx \leq \sqrt{\left(\int_\calX p(x)^2 \dx\right)\left(\int_\calX q(x)^2 \dx\right)}.
\end{equation}

\section{Notations}

\begin{tabular}{ll}
$D_\alpha^\HS$ &   H\"older proper non-projective Scored-induced divergence~\cite{scaleinvariantDiv-2014}\\
$D_{\alpha}^\Holder$ & H\"older improper projective pseudo-divergence (new)\\
$D_{\alpha,\gamma}^\Holder$ & H\"older proper projective divergence (new)\\
$D_\alpha^\HE$ & H\"older proper projective escort divergence (new)\\
$\KL$ & Kullback-Leibler divergence~\cite{cover-2012}\\
$\CS$ & Cauchy-Schwarz divergence~\cite{CSRepDataSampling-2011}\\
$B$ & Bhattacharyya distance~\cite{Bhatt-1943}\\
$B_{\frac{1}{\alpha}}$ & skew Bhattacharyya distance~\cite{BR-2011}\\
$D_\gamma$ & $\gamma$-divergence (score-induced)~\cite{GammaDiv-2008}\\
$p_\alpha^E, q_\beta^E$ & escort distributions\\
$\alpha, \beta$ & H\"older conjugate pair of exponents: $\frac{1}{\alpha}+\frac{1}{\beta}=1$ \\
$\bar\alpha,\beta$ & H\"older conjugate exponent: $\bar\alpha=\beta=\frac{\alpha}{\alpha-1}$ \\
$\theta_p, \theta_q$ & natural parameters of exponential family distributions\\
$\calX$ & support of distributions\\
$\mu$ & Lebesgue measure\\
$L^\gamma(\calX,\mu)$ & Lebesgue space of functions $f$ such that $\int_\calX |f(x)|^\gamma \dx<\infty$ 
\end{tabular}

\tableofcontents

\end{document}